\theoremstyle{plain}
\newtheorem{thm}{\protect\theoremname}
  \theoremstyle{plain}
  \newtheorem{lem}{\protect\lemmaname}
  \theoremstyle{plain}
  \newtheorem{prop}{\protect\propositionname}
  \providecommand{\lemmaname}{Lemma}
  \providecommand{\propositionname}{Proposition}
\providecommand{\theoremname}{Theorem}
\begin{document}
\global\long\def\mb#1{\hm{#1}}
\global\long\def\mbb#1{\mathbb{#1}}
\global\long\def\mc#1{\mathcal{#1}}
\global\long\def\mcc#1{\mathscr{#1}}
\global\long\def\mr#1{\mathrm{#1}}
\global\long\def\msf#1{\mathsf{#1}}
\global\long\def\mfk#1{\mathfrak{#1}}
\global\long\def\E{\mbb E}
\global\long\def\P{\mbb P}
\global\long\def\var{\ensuremath{\mr{var}}}
\global\long\def\T{\scalebox{0.55}{\ensuremath{\msf T}}}
\global\long\def\d{\ensuremath{\mr d}}
\global\long\def\tr{\ensuremath{\mr{tr}}}
\global\long\def\supp{\ensuremath{\,\mr{supp}}}
\global\long\def\sgn{\ensuremath{\,\mr{sgn}}}
\global\long\def\F{\scalebox{0.55}{\ensuremath{\mr F}}}
\global\long\def\argmax{\operatorname*{argmax}}
\global\long\def\argmin{\operatorname*{argmin}}
\global\long\def\defeq{\stackrel{\textup{\tiny def}}{=}}
\global\long\def\bbone{\mbb 1}
\global\long\def\norm#1{{\left\lVert #1\right\rVert }}

\makeatletter
\begin{frontmatter}
\title{Estimation from Nonlinear Observations via Convex Programming\\
with Application to Bilinear Regression}
\runtitle{Estimation from Nonlinear Observations via Convex Programming}

\begin{aug} 
  \author{\fnms{Sohail} \snm{Bahmani}\ead[label=e1]{sohail.bahmani@ece.gatech.edu}}
  \address{School of Electrical and Computer Engineering,\newline Georgia Institute of Technology,\newline  Atlanta, GA 30332-0250\newline
  \printead{e1}
  }

  \runauthor{S.\ Bahmani}

\end{aug}
\makeatother

\begin{abstract}
We propose a computationally efficient estimator, formulated as a convex program, for a broad class of nonlinear regression problems
that involve \emph{difference of convex }(DC) nonlinearities. The proposed method can be viewed as a significant extension of the
``anchored regression'' method formulated and analyzed in \citep{Bahmani2017Anchored} for regression with convex nonlinearities. Our main assumption, in
addition to other mild statistical and computational assumptions, is availability of a certain approximation oracle for the average
of the gradients of the observation functions at a ground truth. Under this assumption and using a PAC-Bayesian analysis we show that the
proposed estimator produces an accurate estimate with high probability. As a concrete example, we study the proposed framework in the bilinear
regression problem with Gaussian factors and quantify a sufficient sample complexity for exact recovery. Furthermore, we describe a computationally tractable scheme that provably produces the required approximation oracle in the considered bilinear regression problem.
\end{abstract}

\begin{keyword}[class=MSC]
\kwd[Primary ]{62F10}
\kwd{90C25}
\kwd[; secondary ]{62P30}
\end{keyword}

\begin{keyword}
\kwd{nonlinear regression}
\kwd{convex programming}
\kwd{PAC-Bayesian analysis}
\kwd{bilinear regression}
\end{keyword}

\end{frontmatter}

\section{\label{sec:introduction}Introduction}

Let $f_{1}^{+},f_{2}^{+},\dotsc,f_{n}^{+}$ be i.i.d. copies of a
random \emph{convex} function $f^{+}:\mbb R^{d}\to\mbb R$. Similarly,
let $f_{1}^{-},f_{2}^{-},\dotsc,f_{n}^{-}$ be i.i.d. copies of a
random convex function $f^{-}:\mbb R^{d}\to\mbb R$. For simplicity,
we also assume that the functions $f^{+}$ and $f^{-}$ are differentiable.\footnote{Religiously, we may add ``almost everywhere almost surely.''}
We observe a parameter $\mb x_{\star}\in\mbb R^{d}$ indirectly through the measurements
\begin{align}
y_{i} & =f_{i}^{+}\left(\mb x_{\star}\right)-f_{i}^{-}\left(\mb x_{\star}\right)+\xi_{i} & ,\,i=1,\dotsc,n\,,\label{eq:dc-reg}
\end{align}
 where $\xi_{i}$s denote additive noise. Given the data \[\left(f_{i}^{+}\left(\cdot\right),f_{i}^{-}\left(\cdot\right),y_{i}\right)\,\quad i=1,\dotsc,n\,,\]
the goal is to accurately estimate $\mb x_{\star}$, up to the possible inherent ambiguities, by a computationally
tractable procedure. 

One can immediately notice the \emph{difference
of convex }(DC) structure\footnote{Sometimes this structure is referred to as \emph{convex-concave}, indicating the decomposition into the sum of a convex function and
a concave function.} in the observation model \eqref{eq:dc-reg}. 
Many parametric regression problems
can be abstracted as \eqref{eq:dc-reg} due to richness of the set of DC functions \cite{Hartman1959Functions}; for instance, any smooth function can be expressed in the DC form using positive and negative semidefinite
parts of its Hessian. While it is evident from the considered form of the observed data, we emphasize that the DC decomposition of the observation function is assumed to be known and our proposed estimator relies on such a DC decomposition.

In the context of the model \eqref{eq:dc-reg}, the standard estimators
based on empirical risk minimization such as (nonlinear) least squares
would lead to nonconvex optimization problems that are generally
computationally hard. Thus, without making any assumption, our search
for a computationally efficient estimator for \eqref{eq:dc-reg} may
be futile. Of course, statistical assumptions are also necessary to make the estimation meaningful; the observations must convey (enough) information about the ground truth parameter. All of the assumptions we make are discussed in more detail in Section \ref{ssec:assumptions}.

Throughout we use the notation $\E_{\mc D}$ or $\mbb E_{\mc D^{n}}$
to denote the expectation with respect to a single or multiple observations.
Outer product of vectors is denoted by the binary operation $\otimes$.
Furthermore, $\left\lVert \cdot\right\rVert $, $\left\lVert \cdot\right\rVert _{\F}$,
and $\left\lVert \cdot\right\rVert _{\mr{op}}$ respectively denote
the usual Euclidean norm, Frobenius norm, and operator norm.

\subsection{\label{ssec:assumptions}Statistical and computational assumptions}
In this section we describe the main statistical and computational assumptions we rely on in our analysis some of which were alluded to above. Stating some of these assumptions requires us to define certain parameters of the model for which we provide the motivations subsequently.

To avoid long expressions, for $i=1,\dotsc,n$, we define 
\begin{align}
q_{i}(\mb h) & \defeq\frac{1}{2}\left|\langle\nabla f_{i}^{+}(\mb x_{\star})-\nabla f_{i}^{-}(\mb x_{\star}),\mb h\rangle\right|\,,\label{eq:qi}
\end{align}
which are clearly nonnegative and positive homogeneous. Therefore,
by the triangle inequality, they also satisfy
\begin{align}
\left|q_{i}(\mb h)-q_{i}(\mb h')\right| & \le q_{i}(\mb h-\mb h')\,,\label{eq:triangle-like}
\end{align}
 for every pair of $\mb h,\mb h'\in\mbb R^{d}$. 
 
As it becomes
clear in the sequel, the central piece in our analysis is to establish
 a lower bound for the empirical process $\frac{1}{n}\sum_{i=1}^{n}q_{i}(\mb h)$ uniformly for a set of vectors $\mb h$. A crucial point in
our proof is that $\E_{\mc D}\left(q_i(\mb z)\right)$ is linear
 in $\left\lVert \mb z\right\rVert $. 
 If $\E_{\mc D}(q_i(\mb z))$ had a different modulus of continuity and did not
admit a lower bound with linear growth in $\left\lVert \mb z\right\rVert $,
then a nontrivial lower bound for the mentioned empirical process
that holds uniformly in an arbitrarily small neighborhood of the origin
might not exist. The consequence would be an error bound
that does not vanish by removing the additive noise. These circumstances
are observed and well-understood, for instance, in the contexts of
\emph{ratio limit theorems} \citep{Gine2003Ratio,Gine2006Concentration},
the issue of a nontrivial \emph{version space }in learning problems
\citep{Mendelson2014Learning,Mendelson2015Learning}, and implicitly
in specific applications such one-bit compressed sensing and its generalizations
\citep{Plan2013Robust,Plan2016Generalized}.

We use the random function $q(\mb h)=\frac{1}{2}\left|\langle\nabla f^{+}(\mb x_{\star})-\nabla f^{-}(\mb x_{\star}),\mb h\rangle\right|$,
which has the same law as the functions $q_{i}$, to define a few
important quantities below.

\paragraph{Conditioning:}

Let $\mbb S^{d-1}$ denote the usual unit sphere in $\mbb R^{d}$.
Given $\mc S\subseteq\mbb S^{d-1}$, we define $\lambda_{\mc D}$
and $\varLambda_{\mc D}$ as

\begin{align}
\lambda_{\mc D} & \defeq\inf_{\mb z\in\mc S}\ \E_{\mc D}\left(q(\mb z)\right)\,,\label{eq:small-lambda}
\end{align}
and
\begin{align}
\varLambda_{\mc D} & \defeq\sup_{\mb z\in\mc S}\ \E_{\mc D}\left(q(\mb z)\right)\,.\label{eq:big-lambda}
\end{align}
The dependence of $\lambda_{\mc D}$ and $\varLambda_{\mc D}$ on
$\mc S$ will always be clear from the context, thus we do not make
this dependence explicit merely to simplify the notation. 
Our results will depend on the \emph{condition number} $\varLambda_\mc{D}/\lambda_\mc{D}$. In particular, it is important to have $\lambda_\mc{D}> 0$.

While generically $\mc S$ can be set to $\mbb S^{d-1}$ in the definitions \eqref{eq:small-lambda}
and \eqref{eq:big-lambda}, in some applications we may choose $\mc S$
to be a proper subset of $\mbb S^{d-1}$. This restriction helps us
avoiding a \emph{degeneracy} that leads to $\lambda_{\mc D}=0$ and
vacuous error bounds. An interesting example occurs in the \emph{bilinear
regression} problem discussed in Section \ref{sec:bilinear-regression}.\looseness=-1

 Our proposed estimator, described in Section \ref{sec:main-result}, can be viewed as an approximation to 
\[
	\argmax_{\mb{x}} \ \E\left(\frac{1}{2}\langle\nabla f^+(\mb{x}_\star) + \nabla f^-(\mb{x}_\star),\mb{x}-\mb{x}_\star \rangle-\max\{f^+(\mb{x})-f^+(\mb{x}_\star),f^-(\mb{x})-f^-(\mb{x}_\star)\}\right)\,, 
\]
disregarding the additive constants in the objective function.
The importance of $\lambda_\mc{D}$ can be explained by inspecting the uniqueness of the above ``idealized'' estimator. By convexity of $f^\pm(\cdot)$ we have 
\begin{align*}
	f^\pm(\mb{x}) - f^\pm(\mb{x}_\star) &\ge\langle\nabla f^\pm(\mb{x}_\star),\mb{x}-\mb{x}_\star\rangle \,, 
\end{align*}
and thereby 
\begin{align*}
	\max\{f^+(\mb{x})-f^+(\mb{x}_\star),f^-(\mb{x})-f^-(\mb{x}_\star)\} & \ge \max\{\langle\nabla f^+(\mb{x}_\star),\mb{x}-\mb{x}_\star\rangle,\langle\nabla f^-(\mb{x}_\star),\mb{x}-\mb{x}_\star\rangle\}\,.
\end{align*}
Therefore, the objective function of the idealized estimator is dominated by 
\[-\frac{1}{2}\E\left(|\langle\nabla f^+(\mb{x}_\star) - \nabla f^-(\mb{x}_\star),\mb{x}-\mb{x}_\star \rangle|\right) \,.\]
The points $\mb{x}$ for which $f^+(\mb{x})-f^-(\mb{x}) = f^+(\mb{x}_\star) - f^-(\mb{x}_\star)$ almost surely are effectively equivalent to $\mb{x}_\star$. Thus, in view of \eqref{eq:small-lambda},  with  $\mc{S}\subseteq\mbb{S}^{d-1}$ being the complement of the directions from $\mb{x}_\star$ to its equivalents, having $\lambda_\mc{D} > 0$ guarantees that the idealized estimator can only be $\mb{x}=\mb{x}_\star$.

\paragraph{Regularity:}

For technical reasons we also need some regularity for the data distribution. 
To exclude pathologically heavy-tailed data distributions we make the mild assumption that the (directional) second moment of $\nabla f^+(\mb{x}_\star) -\nabla f^-(\mb{x}_\star)$ is bounded from above by its corresponding (directional) first moment. This assumption can be made precise in terms of $q(\mb{z})$ as follows. For some constant $\eta_{\mc D}>1$ we assume that 
\begin{align}
\sqrt{\E_{\mc D}\left(q^{2}(\mb z)\right)} & \le\eta_{\mc D}\E_{\mc D}\left(q(\mb z)\right)\,,\label{eq:tail-weight}
\end{align}
holds for all $\mb z\in\mbb R^{d}$.  Furthermore,
with $\mb g$ denoting a standard normal random variable, we define
\begin{equation}
\varGamma_{\mc D}\defeq\sqrt{\E_{\mb g}\E_{\mc D}\left(q^{2}(\mb g)\right)}=\frac{1}{2}\sqrt{\E_{\mc D}\left(\left\lVert \nabla f^{+}(\mb x_{\star})-\nabla f^{-}(\mb x_{\star})\right\rVert ^{2}\right)}\,,\label{eq:sensitivity}
\end{equation}
 which is a measure of smoothness of the functions $q_{i}$ near the
origin. 
The main factor in the sample complexity we establish is $\varGamma^2_\mc{D}/\varLambda^2_\mc{D}$ that can be interpreted as the \emph{effective dimension} of the problem since it is bounded by the ratio of the trace and the operator norm of the correlation matrix of $\nabla f^{+}(\mb x_{\star})-\nabla f^{-}(\mb x_{\star})$.

\paragraph{Approximation oracle:}

We assume an \emph{approximation oracle} is available that provides
a vector $\mb a_{0}\in\mbb R^{d}$ which, for some $\varepsilon\in\left(0,1\right]$,
obeys 
\begin{align}
\left\lVert \mb a_{0}-\frac{1}{2n}\sum_{i=1}^{n}\nabla f_{i}^{+}(\mb x_{\star})+\nabla f_{i}^{-}(\mb x_{\star})\right\rVert  & \le\frac{1-\varepsilon}{2}\lambda_{\mc D}\,.\label{eq:ApproxOracle}
\end{align}
Having access to the approximation oracle above is the strongest assumption
we make. This assumption could be excessive for prediction tasks where
the goal is merely accurate approximation of $f^{+}(\mb x_{\star})-f^{-}(\mb x_{\star})$
for the unseen data. However,  in this paper we are analyzing an estimation
task in which accurate estimation of $\mb x_{\star}$ is the goal rather than predicting $f^+(\mb{x}_\star) - f^-(\mb{x}_\star)$.
A standard approach to such estimation problems is to optimize an
\emph{empirical risk} that quantifies the consistency of any candidate
estimate with the observations. Because these risk functions are generally
nonconvex, accuracy guarantees for iterative estimation procedures
is often established assuming that they are \emph{initialized} at
a point, say $\mb x_{0}$, in a relatively small neighborhood of the
ground truth $\mb x_{\star}$ (i.e., $\mb x_{0}\approx\mb x_{\star}$).
The imposed bound \eqref{eq:ApproxOracle} can be derived from such
initialization conditions; e.g., if $\nabla f^{+}(\cdot)+\nabla f^{-}(\cdot)$
is a sufficiently smooth mapping, then $\mb x_{0}\approx\mb x_{\star}$
would imply $\mb a_{0}=\frac{1}{2n}\sum_{i=1}^{n}\nabla f_{i}^{+}(\mb x_{0})+\nabla f_{i}^{-}(\mb x_{0})\approx\frac{1}{2n}\sum_{i=1}^{n}\nabla f_{i}^{+}(\mb x_{\star})+\nabla f_{i}^{-}(\mb x_{\star})$.
Finally, if the vectors ${\nabla f^{+}(\mb x_{\star})+\nabla f^{-}(\mb x_{\star})}$
are sufficiently light-tailed in the sense of being bounded in a certain
\emph{Orlicz norm}\footnote{For a precise definition, interested readers are referred to \citep[Appendix A.1]{Koltchinskii2011Oracle}
and the references therein.}, then we can simply require 
\begin{align*}
\left\lVert \mb a_{0}-\frac{1}{2}\E_{\mc D}\left(\nabla f^{+}(\mb x_{\star})+\nabla f^{-}(\mb x_{\star})\right)\right\rVert  & \le\frac{1-\varepsilon'}{2}\lambda_{\mc D}\,,
\end{align*}
 and then resort to a matrix concentration inequality such as the
matrix Bernstein inequality \citep[Theorem 2.7]{Koltchinskii2011Oracle}
or the matrix Rosenthal inequality \citep{Chen2012Masked,Junge2013Noncommutative,Mackey2014Matrix}
to recover the condition \eqref{eq:ApproxOracle}. We do not attempt
to provide a general framework to address these details in this paper.
However, in the context of the bilinear regression problem, following the idea of  ``spectral initialization'' used in nonconvex methods (see, e.g., \cite{Netrapalli2013Phase, Candes2014Phase, Ling2018Regularized, Li2018Rapid, Ma2017Implicit}) we provide an explicit example for an implementable approximation
oracle in Section \ref{sec:bilinear-regression}.

The three assumptions stated above are primarily related to the statistical
model. We also make the following assumptions on the computational
model in order to provide a tractable method.

\paragraph{Computational assumptions:}

As mentioned above, we emphasize that our approach requires the access to the DC decomposition
of the observation function. Computing such a decomposition
can be intractable in general (see, e.g., \citep{Ahmadi2017DC} and
references therein). However, assuming access to an efficiently computable
DC form is a reasonable compromise for creating a concrete computational
framework. In many applications the DC decomposition is
provided explicitly or is easy to compute. For instance, many statistical problems are concerned with observations of the form $y_i=\phi(\langle\mb{a}_{i}, \mb{x}_{\star}\rangle)$ for a certain nonlinear function $\phi : \mbb{R}\to\mbb{R}$ and data point $\mb{a}_i$. In some interesting instances, the desired DC decomposition is relatively easy to compute because it reduces to computing a DC decomposition of $\phi(\cdot)$ over $\mbb{R}$. Of course, as a natural requirement for implementing
optimization algorithms such as the first-order methods, we also need the components of the DC decomposition
(and their gradients) to be computable.

\section{\label{sec:main-result}The estimator and the main results}

Given $\mb a_{0}$, the output of the approximation oracle which obeys
\eqref{eq:ApproxOracle}, we formulate the estimator of $\mb x_{\star}$
as

\begin{align}
\widehat{\mb x} & \in\argmax_{\mb x}\ \langle\mb a_{0},\mb x\rangle-\frac{1}{n}\sum_{i=1}^{n}\max\left\{ f_{i}^{+}\left(\mb x\right)-y_{i},f_{i}^{-}\left(\mb x\right)\right\} \,,\label{eq:AnchoredERM}
\end{align}
which is a convex program that can be solved efficiently.

 Let us first demystify the formulation of the estimator by some intuitive explanations. Using the identity \[\max\{u,v\} = \frac{u+v+|u-v|}{2}\,,\] the objective function in \eqref{eq:AnchoredERM} can be expressed as 
\begin{align*}
	 & \langle{\mb{a}_0},\mb{x}\rangle -\frac{1}{n}\sum_{i=1}^{n}\max\left\{ f_{i}^{+}\left(\mb x\right)-y_{i},f_{i}^{-}\left(\mb x\right)\right\} \\ & =  -\frac{1}{2n}\sum_{i=1}^{n} \left(f_{i}^{+}\left(\mb x\right)+f_{i}^{-}\left(\mb x\right)-\langle{2\mb{a}_0},\mb{x}\rangle-y_{i}\right) -\frac{1}{2n}\sum_{i=1}^n|f_{i}^{+}\left(\mb x\right)-f_{i}^{-}\left(\mb x\right)-y_i|\,.
\end{align*}
Suppose that, instead of \eqref{eq:ApproxOracle}, we have $\mb{a}_0= \frac{1}{2n}\sum_{i=1}^{n}\nabla f_{i}^{+}(\mb x_{\star})+\nabla f_{i}^{-}(\mb x_{\star})$. Inspecting the first sum, it is evident that it is, up to additive constants, a Bregman divergence that admits $\mb{x}_\star$ as a minimizer. Furthermore, the second sum is expected to be minimized at a point close to $\mb{x}_\star$ because the observations obey $y_i\approx f_i^+(\mb{x}_\star)-f_i^-(\mb{x}_\star)$. Therefore, we can expect that the estimator $\widehat{\mb{x}}$ is not far from $\mb{x}_\star$. The Karush-Kuhn-Tucker (KKT) stationarity condition, explains that the approximation error of $\mb{a}_0$ in \eqref{eq:ApproxOracle} is tolerable because the the second sum contains nondifferentiable terms with (potantially) large subdifferentials.  Our analysis makes these intuitive explanations rigorous in an implicit manner.

With the definitions and assumptions stated in Section \ref{ssec:assumptions},
our main result is the following theorem that provides the sample
complexity for accuracy \eqref{eq:AnchoredERM} in a generic setting.
This theorem is a simple consequence of Proposition \ref{pro:master}
and Lemma \ref{lem:accuracy} stated subsequently.
\begin{thm}
\label{thm:generic}Given a set $\mc S\subseteq\mbb S^{d-1}$ and
parameter $\varepsilon\in\left(0,1\right)$, suppose \eqref{eq:small-lambda},
\eqref{eq:big-lambda}, \eqref{eq:tail-weight}, \eqref{eq:sensitivity},
and \eqref{eq:ApproxOracle} hold. Furthermore, for a solution $\widehat{\mb x}$
of \eqref{eq:AnchoredERM}, suppose that we have 
\begin{align}
\widehat{\mb x}-\mb x_{\star} & \in\left\lVert \widehat{\mb x}-\mb x_{\star}\right\rVert \mc S\,.\label{eq:restriction}
\end{align}
If the number of measurements obeys 
\begin{align*}
n & \ge C\max\left\{ \eta_{\mc D}^{2}\log\frac{2}{\delta},\:\frac{\varGamma_{\mc D}^{2}}{\lambda_{\mc D}\varLambda_{\mc D}}\right\} \frac{\varLambda_{\mc D}^{2}}{\lambda_{\mc D}^{2}}\eta_{\mc D}^{2}\varepsilon^{-4}\,,
\end{align*}
for a sufficiently large absolute constant $C>0$, then with probability
$\ge1-\delta$ we have
\begin{align*}
\left\lVert \widehat{\mb x}-\mb x_{\star}\right\rVert  & \le\frac{\frac{1}{n}\sum_{i=1}^{n}\left|\xi_{i}\right|}{\frac{1}{2}\lambda_{\mc D}}\,.
\end{align*}
\end{thm}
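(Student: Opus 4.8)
The plan is to combine a deterministic ``basic inequality'' coming from the optimality of $\widehat{\mb x}$ in \eqref{eq:AnchoredERM} with a uniform high-probability lower bound on the empirical process $\frac{1}{n}\sum_{i=1}^{n}q_i(\mb h)$, the latter being the content of Proposition \ref{pro:master}. Writing $\mb h\defeq\widehat{\mb x}-\mb x_\star$ and $(u)_+\defeq\max\{u,0\}$, I would first evaluate the objective of \eqref{eq:AnchoredERM} at $\mb x_\star$; using \eqref{eq:dc-reg} one has $f_i^+(\mb x_\star)-y_i=f_i^-(\mb x_\star)-\xi_i$, so the $i$-th max collapses to $f_i^-(\mb x_\star)+(-\xi_i)_+$. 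Since $\widehat{\mb x}$ is a maximizer, the objective at $\widehat{\mb x}$ is no smaller, which after rearranging yields
\[
\frac{1}{n}\sum_{i=1}^{n}\Big(\max\{f_i^+(\widehat{\mb x})-y_i,f_i^-(\widehat{\mb x})\}-f_i^-(\mb x_\star)-(-\xi_i)_+\Big)\le\langle\mb a_0,\mb h\rangle\,.
\]

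Next I would lower bound each summand on the left. Convexity of $f_i^\pm$ gives $f_i^\pm(\widehat{\mb x})\ge f_i^\pm(\mb x_\star)+\langle\nabla f_i^\pm(\mb x_\star),\mb h\rangle$, so the $i$-th max is at least $f_i^-(\mb x_\star)$ plus the maximum of the two linearized terms. Applying the identity $\max\{u,v\}=\frac{1}{2}(u+v+|u-v|)$ to these linearized terms and using the reverse triangle inequality on the resulting absolute value (to extract $q_i(\mb h)$ from \eqref{eq:qi} at the cost of a $\frac{1}{2}|\xi_i|$ term, after which the $\xi_i$ bookkeeping consolidates into $-|\xi_i|$) produces
\[
\frac{1}{n}\sum_{i=1}^{n}q_i(\mb h)\le\Big\langle\mb a_0-\frac{1}{2n}\sum_{i=1}^{n}\big(\nabla f_i^+(\mb x_\star)+\nabla f_i^-(\mb x_\star)\big),\,\mb h\Big\rangle+\frac{1}{n}\sum_{i=1}^{n}|\xi_i|\,,
\]
where the mean-gradient term has been moved to the right so that it is paired against the oracle vector $\mb a_0$. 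Cauchy--Schwarz together with the oracle guarantee \eqref{eq:ApproxOracle} then bounds the inner product by $\frac{1-\varepsilon}{2}\lambda_{\mc D}\norm{\mb h}$. This is the entire deterministic skeleton, which I expect to be encapsulated by Lemma \ref{lem:accuracy}.

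The crux is the random step, supplied by Proposition \ref{pro:master}: under the stated sample complexity, with probability at least $1-\delta$ one has $\frac{1}{n}\sum_{i=1}^{n}q_i(\mb z)\ge(1-\tfrac{\varepsilon}{2})\lambda_{\mc D}$ simultaneously for all $\mb z\in\mc S$. By positive homogeneity of the $q_i$ and the restriction \eqref{eq:restriction}, $\frac{1}{n}\sum_i q_i(\mb h)=\norm{\mb h}\cdot\frac{1}{n}\sum_i q_i(\mb h/\norm{\mb h})$ with $\mb h/\norm{\mb h}\in\mc S$, so this lower bound applies. Substituting it into the left side of the displayed inequality and cancelling one factor of $\norm{\mb h}$ gives $(1-\tfrac{\varepsilon}{2})\lambda_{\mc D}\norm{\mb h}\le\frac{1-\varepsilon}{2}\lambda_{\mc D}\norm{\mb h}+\frac{1}{n}\sum_i|\xi_i|$; the coefficient on the left minus that on the right equals exactly $(1-\tfrac{\varepsilon}{2})-\tfrac{1-\varepsilon}{2}=\tfrac{1}{2}$, which rearranges to the claimed bound $\norm{\mb h}\le\big(\frac{1}{n}\sum_i|\xi_i|\big)/\big(\tfrac{1}{2}\lambda_{\mc D}\big)$.

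I expect the genuine difficulty to lie entirely in establishing the uniform lower bound of Proposition \ref{pro:master}, not in the reduction above. Because the $q_i$ are positive homogeneous, their empirical average can be made arbitrarily small near the origin, so a naive concentration argument over a neighborhood of $\mb x_\star$ fails; one must control the process on the sphere $\mc S$ and then rescale. The role of the linearity of $\mb z\mapsto\E_{\mc D}(q(\mb z))$ in $\norm{\mb z}$, the tail-to-first-moment comparison \eqref{eq:tail-weight}, and the smoothness parameter $\varGamma_{\mc D}$ in \eqref{eq:sensitivity} is precisely to make a PAC-Bayesian deviation bound deliver a \emph{multiplicative} $(1-\tfrac{\varepsilon}{2})$ lower bound with the stated number of samples; verifying the concentration, together with the complexity (chaining) estimates hidden in $\varGamma_{\mc D}^2/\varLambda_{\mc D}^2$, is the main obstacle.
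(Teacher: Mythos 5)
Your proposal is correct and follows essentially the same route as the paper: the deterministic skeleton you derive from optimality of $\widehat{\mb x}$ is exactly the paper's proof of Lemma \ref{lem:accuracy} specialized to $\mb x_{0}=\mb x_{\star}$ and $\varepsilon_{0}=0$ (your consolidation of the noise terms via the reverse triangle inequality is algebraically equivalent to the paper's $\max\{\xi_i,0\}+\max\{-\xi_i,0\}=|\xi_i|$ bookkeeping), and the random ingredient is Proposition \ref{pro:master} invoked with parameter $\varepsilon/2$ together with \eqref{eq:small-lambda} and homogeneity under \eqref{eq:restriction}, matching the paper's combination and yielding the same $\frac{1}{2}\lambda_{\mc D}$ margin. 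The only inessential difference is that you unfold Lemma \ref{lem:accuracy} inline rather than citing it, and your closing aside attributing the sample complexity to chaining is slightly off (the paper's Proposition \ref{pro:master} uses a PAC-Bayesian smoothing argument, not chaining), but this does not affect the proof.
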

\begin{proof}
Given the lower bound on $n$, Proposition \ref{pro:master} below
together with \eqref{eq:small-lambda} guarantee that, with probability
$\ge1-\delta$, we have
\begin{align*}
\frac{1}{n}\sum_{i=1}^{n}q_{i}(\mb h) & \ge(1-\frac{1}{2}\varepsilon)\lambda_{\mc D} \,,
\end{align*}
 for all $\mb h\in\mc S$. The desired error bound follows immediately
from Lemma \ref{lem:accuracy} with $\mb x_{0}=\mb x_{\star}$ and
$\varepsilon_{0}=0$.
\end{proof}
The condition \eqref{eq:restriction} in the theorem may appear unnatural at first. Clearly, the condition holds if we choose $\mc S=\mbb S^{d-1}$. However, the condition \eqref{eq:restriction} is imposed to address the situations where a set of equivalent ground
truth vectors $\mb x_{\star}$ exists and we only need to prove accuracy
with respect to the closest point in this set. This relaxed accuracy
requirement induces additional structure on $\widehat{\mb x}-\mb x_{\star}$ that should be considered to avoid the degeneracy at $\lambda_{\mc{D}}=0$. The sole purpose of \eqref{eq:restriction} is to capture the mentioned additional structures. The bilinear regression problem discussed below in Section \ref{sec:bilinear-regression}
is an example where it is important to have a nontrivial set $\mc S$. 

Furthermore, with $\kappa \defeq \varLambda_\mc{D}/\lambda_\mc{D}$ and $d_\mr{eff}\defeq\varGamma^2_\mc{D}/\varLambda_\mc{D}^2$, the achievable sample complexity stated by the Theorem \ref{thm:generic} can be rewritten as
\[n\ge C \max\{\eta_\mc{D}^2 \log\frac{2}{\delta},\,\kappa\, d_\mr{eff}\}\kappa^2\eta_\mc{D}^2 \varepsilon^{-4}\,,\]
signifying the role of the effective dimension $d_\mr{eff} \le d$, and the conditioning $\kappa$ of the problem.

Under the assumptions stated in Section \ref{ssec:assumptions},
accuracy of the estimator \eqref{eq:AnchoredERM} can be reduced to
the existence of an appropriate uniform lower bound for the empirical
process $\frac{1}{n}\sum_{i=1}^{n}q_{i}(\mb h)$ as a function of
$\mb h$. The following Lemma \ref{lem:accuracy}, proved in Section
\ref{ssec:Proof-Accuracy-Lem}, provides the precise form of this
reduction.
\begin{lem}
\label{lem:accuracy} Let $\mb x_{0}$ be one of the possibly many
vectors equivalent to $\mb x_{\star}$ meaning that 
\begin{align*}
f^{+}(\mb x_{0})-f^{-}(\mb x_{0}) & =f^{+}(\mb x_{\star})-f^{-}(\mb x_{\star})\,,
\end{align*}
 almost surely. Given a set $\mc S\subseteq\mbb S^{d-1}$, recall
the definition \eqref{eq:small-lambda} and assume that an analog
of the condition \eqref{eq:ApproxOracle} with respect to $\mb x_{0}$
holds, namely, 
\begin{align}
\left\lVert \mb a_{0}-\frac{1}{2n}\sum_{i=1}^{n}\nabla f_{i}^{+}(\mb x_{0})+\nabla f_{i}^{-}(\mb x_{0})\right\rVert  & \le\frac{1-\varepsilon}{2}\lambda_{\mc D}\,,\label{eq:ApproxOracle-x0}
\end{align}
 for some constant parameter $\varepsilon\in(0,1)$. Furthermore,
suppose that \eqref{eq:restriction} holds and that for a certain
absolute constant $\varepsilon_{0}\in[0,1)$,
\begin{align}
\frac{1}{2n}\sum_{i=1}^{n}\left|\langle\nabla f_{i}^{+}(\mb x_{0})-\nabla f_{i}^{-}(\mb x_{0}),\mb h\rangle\right| & \ge(1-\frac{\varepsilon+\varepsilon_{0}}{2})\lambda_{\mc D} \,,\label{eq:empirical-process-LB}
\end{align}
holds for every $\mb h\in\mc S$. Then the estimate $\widehat{\mb x}$
obeys 
\[
\left\lVert \widehat{\mb x}-\mb x_{\star}\right\rVert \le\frac{\frac{1}{n}\sum_{i=1}^{n}\left|\xi_{i}\right|}{\frac{1-\varepsilon_{0}}{2}\lambda_{\mc D}}\,.
\]
\end{lem}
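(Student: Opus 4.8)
The plan is to turn the optimality of $\widehat{\mb x}$ in \eqref{eq:AnchoredERM} into a lower bound on the linearized empirical process $\tfrac{1}{2n}\sum_i|\langle\nabla f_i^+(\mb x_0)-\nabla f_i^-(\mb x_0),\widehat{\mb x}-\mb x_0\rangle|$, which is exactly the quantity controlled from below by \eqref{eq:empirical-process-LB}. Write $\mb h\defeq\widehat{\mb x}-\mb x_0$ and let $L$ denote the objective of \eqref{eq:AnchoredERM}. Since $\mb x_0$ is equivalent to $\mb x_\star$, we have $y_i=f_i^+(\mb x_0)-f_i^-(\mb x_0)+\xi_i$, so the data residual at $\mb x_0$ is precisely $-\xi_i$. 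First I would use the identity $\max\{u,v\}=\tfrac12(u+v)+\tfrac12|u-v|$ to rewrite $L$, and invoke optimality of $\widehat{\mb x}$ with $\mb x_0$ as a competitor, i.e. $L(\widehat{\mb x})\ge L(\mb x_0)$. The anchor term contributes $\langle\mb a_0,\mb h\rangle$, while the smooth sum $\tfrac{1}{2n}\sum_i\bigl(f_i^+(\widehat{\mb x})+f_i^-(\widehat{\mb x})-f_i^+(\mb x_0)-f_i^-(\mb x_0)\bigr)$ equals, exactly, $\langle\mb b_0,\mb h\rangle+\tfrac{1}{2n}\sum_i(D_i^++D_i^-)$, where $\mb b_0\defeq\tfrac{1}{2n}\sum_i(\nabla f_i^+(\mb x_0)+\nabla f_i^-(\mb x_0))$ and $D_i^\pm\defeq f_i^\pm(\widehat{\mb x})-f_i^\pm(\mb x_0)-\langle\nabla f_i^\pm(\mb x_0),\mb h\rangle\ge0$ are the Bregman divergences supplied by convexity.

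After rearranging, the optimality inequality reads
\[
\frac{1}{2n}\sum_i\left|\langle\nabla f_i^+(\mb x_0)-\nabla f_i^-(\mb x_0),\mb h\rangle+(D_i^+-D_i^-)-\xi_i\right|+\frac{1}{2n}\sum_i(D_i^++D_i^-)\le\langle\mb a_0-\mb b_0,\mb h\rangle+\frac{1}{2n}\sum_i|\xi_i|\,,
\]
where I have used that the nonlinear finite difference $f_i^+(\widehat{\mb x})-f_i^-(\widehat{\mb x})-f_i^+(\mb x_0)+f_i^-(\mb x_0)$ equals its linearization $\ell_i\defeq\langle\nabla f_i^+(\mb x_0)-\nabla f_i^-(\mb x_0),\mb h\rangle$ plus the remainder $D_i^+-D_i^-$. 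This step is the crux and I expect it to be the main obstacle: the remainder $D_i^+-D_i^-$ is sign-indefinite and can be large in the genuinely nonlinear regime, so it cannot be discarded. The key observation is that I should \emph{keep}, rather than drop, the nonnegative Bregman sum on the left. Because $0\le|D_i^+-D_i^-|\le D_i^++D_i^-$, the triangle inequality gives $\bigl|\ell_i+(D_i^+-D_i^-)-\xi_i\bigr|+(D_i^++D_i^-)\ge|\ell_i-\xi_i|$ termwise, so the nonlinear remainder in the finite difference is exactly absorbed by the Bregman penalty that convexity produces, leaving the purely linear quantity $\tfrac{1}{2n}\sum_i|\ell_i-\xi_i|$ on the left.

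It then remains to chain the standard estimates. A further triangle inequality yields $\tfrac{1}{2n}\sum_i|\ell_i-\xi_i|\ge\tfrac{1}{2n}\sum_i|\ell_i|-\tfrac{1}{2n}\sum_i|\xi_i|$, hence $\tfrac{1}{2n}\sum_i|\ell_i|\le\langle\mb a_0-\mb b_0,\mb h\rangle+\tfrac1n\sum_i|\xi_i|$. On the left, since \eqref{eq:restriction} places $\mb h/\norm{\mb h}$ in $\mc S$, the uniform bound \eqref{eq:empirical-process-LB} and positive homogeneity give $\tfrac{1}{2n}\sum_i|\ell_i|\ge(1-\tfrac{\varepsilon+\varepsilon_0}{2})\lambda_{\mc D}\norm{\mb h}$. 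On the right, Cauchy--Schwarz together with the oracle bound \eqref{eq:ApproxOracle-x0} gives $\langle\mb a_0-\mb b_0,\mb h\rangle\le\tfrac{1-\varepsilon}{2}\lambda_{\mc D}\norm{\mb h}$. Combining the two and using $(1-\tfrac{\varepsilon+\varepsilon_0}{2})-\tfrac{1-\varepsilon}{2}=\tfrac{1-\varepsilon_0}{2}$, the $\lambda_{\mc D}\norm{\mb h}$ terms collapse to $\tfrac{1-\varepsilon_0}{2}\lambda_{\mc D}\norm{\mb h}\le\tfrac1n\sum_i|\xi_i|$, which rearranges to the claimed bound (with $\norm{\widehat{\mb x}-\mb x_\star}=\norm{\mb h}$ in the setting $\mb x_0=\mb x_\star$ used in Theorem \ref{thm:generic}).
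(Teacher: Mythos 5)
Your proposal is correct and takes essentially the same route as the paper: both compare $\widehat{\mb x}$ against the competitor $\mb x_0$ in \eqref{eq:AnchoredERM}, reduce the nonlinear terms to the linearized process $\frac{1}{2n}\sum_{i=1}^{n}\left|\langle\nabla f_{i}^{+}(\mb x_{0})-\nabla f_{i}^{-}(\mb x_{0}),\widehat{\mb x}-\mb x_{0}\rangle\right|$, and conclude via \eqref{eq:empirical-process-LB}, Cauchy--Schwarz with \eqref{eq:ApproxOracle-x0}, and the identity $(1-\frac{\varepsilon+\varepsilon_0}{2})-\frac{1-\varepsilon}{2}=\frac{1-\varepsilon_0}{2}$. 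Your Bregman-absorption step $\left|\ell_i+(D_i^+-D_i^-)-\xi_i\right|+(D_i^++D_i^-)\ge\left|\ell_i-\xi_i\right|$ is precisely the paper's termwise convexity bound $\max\left\{f_{i}^{+}(\widehat{\mb x})-f_{i}^{+}(\mb x_{0}),\,f_{i}^{-}(\widehat{\mb x})-f_{i}^{-}(\mb x_{0})\right\}-\frac{1}{2}\langle\nabla f_{i}^{+}(\mb x_{0})+\nabla f_{i}^{-}(\mb x_{0}),\widehat{\mb x}-\mb x_{0}\rangle\ge\frac{1}{2}\left|\ell_i\right|$ rewritten through $\max\{u,v\}=\frac{u+v+|u-v|}{2}$, with identical noise bookkeeping (your two half-noise terms sum to the paper's $\max\{\xi_i,0\}+\max\{-\xi_i,0\}=|\xi_i|$) and identical constants.
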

Again, in the generic case we choose $\mc S=\mbb S^{d-1}$, $\mb x_{0}=\mb x_{\star}$,
and $\varepsilon_{0}=0$ in Lemma \ref{lem:accuracy}. For the structured
problems mentioned above, however, with a nontrivial choice of $\mc S$
in \eqref{eq:restriction}, we may need to choose $\mb x_{0}\ne\mb x_{\star}$
and an appropriate $\varepsilon_{0}>0$.

Lemma \ref{lem:accuracy} provides an error bound that is proportional
to $\frac{1}{n}\sum_{i=1}^{n}\left|\xi_{i}\right|$. This dependence
is satisfactory for a deterministic noise model where we ought to
consider the worst-case scenarios. However, we may obtain improved
noise dependence for random noise models. In fact, simple modifications
in the proof of Lemma \ref{lem:accuracy} allow us to replace $\frac{1}{n}\sum_{i=1}^{n}\left|\xi_{i}\right|$
in the error bound by the maximum of the two expressions \[\left|\sup_{\mb x}\frac{1}{n}\sum_{i=1}^{n}\xi_{i}\bbone\left(f_{i}^{+}(\mb x)-f_{i}^{-}(\mb x)>f_{i}^{+}(\mb x_{\star})-f_{i}^{-}(\mb x_{\star})\right)\right|\]
and \[\left|\sup_{\mb x}\frac{1}{n}\sum_{i=1}^{n}-\xi_{i}\bbone\left(f_{i}^{+}(\mb x)-f_{i}^{-}(\mb x)>f_{i}^{+}(\mb x_{\star})-f_{i}^{-}(\mb x_{\star})+\xi_{i}\right)\right|\,.\]
These expressions may provide much tighter bounds when the noise is random with a well-behaved distribution. For instance, if $\xi_{1},\dotsc,\xi_{n}$
are i.i.d. zero-mean Gaussian random variables, the first expression
reduces to \emph{the Gaussian complexity} of the functions
$\bbone\left(f_{i}^{+}(\mb x)-f_{i}^{-}(\mb x)\!>\!f_{i}^{+}(\mb x_{\star})-f_{i}^{-}(\mb x_{\star})\right)$
which may be of order $n^{-1/2}$. To keep the exposition simple,
we focus on the deterministic noise model in this paper.

Clearly, to prove accuracy of \eqref{eq:AnchoredERM} through Lemma
\ref{lem:accuracy}, establishing an inequality of the form \eqref{eq:empirical-process-LB}
is crucial. Proposition \ref{pro:master} below can provide a guarantee
for such an inequality in the case $\mb x_{0}=\mb x_{\star}$ and
under the assumptions made in Section \ref{sec:introduction}.
\begin{prop}
\label{pro:master}Let $\varepsilon\in(0,1)$ be a constant parameter.
With the definitions \eqref{eq:qi}, \eqref{eq:small-lambda}, \eqref{eq:big-lambda},
\eqref{eq:tail-weight}, and \eqref{eq:sensitivity}, for any $\delta\in\left(0,1\right]$,
if for a sufficiently large absolute constant $C>0$ we have
\begin{align*}
n & \ge C\max\left\{ \eta_{\mc D}^{2}\log\frac{2}{\delta},\:\frac{\varGamma_{\mc D}^{2}}{\lambda_{\mc D}\varLambda_{\mc D}}\right\} \frac{\varLambda_{\mc D}^{2}}{\lambda_{\mc D}^{2}}\eta_{\mc D}^{2}\varepsilon^{-4}\,,
\end{align*}
 then with probability $\ge1-\delta$ the bound
\begin{align*}
\frac{1}{n}\sum_{i=1}^{n}q_{i}(\mb h) & \ge(1-\varepsilon)\E_{\mc D}(q(\mb h))\,,
\end{align*}
holds for every $\mb h\in\mc S$.
\end{prop}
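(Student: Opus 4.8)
The plan is to establish the uniform lower bound by a PAC-Bayesian (Gaussian smoothing) argument, which automatically handles the supremum over $\mc S$ without a covering or net construction. Since $q_i$ and $\mathbf z\mapsto\E_{\mc D}(q(\mathbf z))$ are positively homogeneous, it suffices to treat unit vectors $\mathbf h\in\mc S$, the bound extending to $\|\mathbf h\|\mc S$ by scaling. First I would fix a scale $\mu>0$ and a bandwidth $\sigma>0$, take the prior $\rho_0=\mc N(\mathbf 0,\sigma^2\mathbf I)$, and for each unit $\mathbf h$ the posterior $\rho_{\mathbf h}=\mc N(\mu\mathbf h,\sigma^2\mathbf I)$, so that $\mathrm{KL}(\rho_{\mathbf h}\|\rho_0)=\mu^2/(2\sigma^2)$ is the same for every $\mathbf h$.

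The smoothing is tied back to the point values through \eqref{eq:triangle-like} and homogeneity. Writing $\mathbf z=\mu\mathbf h+\sigma\mathbf g$ with $\mathbf g$ standard normal, the $1$-homogeneity of $q_i$ together with \eqref{eq:triangle-like} gives $|\E_{\mathbf g}q_i(\mathbf z)-\mu q_i(\mathbf h)|\le\sigma\,\E_{\mathbf g}q_i(\mathbf g)$ and likewise $|\E_{\mathbf g}\E_{\mc D}q(\mathbf z)-\mu\E_{\mc D}q(\mathbf h)|\le\sigma\,\E_{\mathbf g}\E_{\mc D}q(\mathbf g)$. By Jensen and the definition \eqref{eq:sensitivity}, both smoothing errors are governed by $\sigma\varGamma_{\mc D}$, since $\E_{\mathbf g}\E_{\mc D}q(\mathbf g)\le\varGamma_{\mc D}$ and, after an elementary concentration step, $\tfrac1n\sum_i\E_{\mathbf g}q_i(\mathbf g)$ stays within a constant factor of $\varGamma_{\mc D}$ with high probability (these are i.i.d.\ with second moment at most $\varGamma_{\mc D}^2$, so no uniformity is needed here).

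The core is the deviation bound for the smoothed empirical process. Applying the elementary inequality $e^{-u}\le1-u+u^2/2$, valid for all $u\ge0$, to $X=q(\mathbf z)$ and using \eqref{eq:tail-weight} yields, for every fixed $\mathbf z$ and $\theta>0$, the log-MGF bound $\log\E_{\mc D}\exp(-\tfrac\theta n q(\mathbf z))\le-\tfrac\theta n\E_{\mc D}q(\mathbf z)+\tfrac{\theta^2\eta_{\mc D}^2}{2n^2}(\E_{\mc D}q(\mathbf z))^2$. Integrating over $\mathbf z\sim\rho_0$, invoking the Donsker--Varadhan variational formula, and a single application of Markov's inequality then give, simultaneously for all posteriors $\rho$ and hence all $\mathbf h\in\mc S$, an inequality of the form $\tfrac1n\sum_i\E_{\rho_{\mathbf h}}q_i\ge\E_{\rho_{\mathbf h}}\E_{\mc D}q-(\mathrm{KL}+\log(1/\delta))/\theta-\tfrac{\theta\eta_{\mc D}^2}{2n}\E_{\rho_{\mathbf h}}[(\E_{\mc D}q)^2]$. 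Here the sublinearity of $\mathbf z\mapsto\E_{\mc D}q(\mathbf z)$ lets me bound $\E_{\rho_{\mathbf h}}[(\E_{\mc D}q)^2]\lesssim\mu^2\varLambda_{\mc D}^2+\sigma^2\varGamma_{\mc D}^2$ uniformly in $\mathbf h$. Combining this with the two comparison inequalities of the previous paragraph and dividing through by $\mu$ produces, uniformly over $\mc S$, the bound $\tfrac1n\sum_iq_i(\mathbf h)\ge\E_{\mc D}q(\mathbf h)-c_1\tfrac\sigma\mu\varGamma_{\mc D}-F/\mu$, where $F=(\mathrm{KL}+\log(1/\delta))/\theta+\tfrac{\theta\eta_{\mc D}^2}{2n}(\mu^2\varLambda_{\mc D}^2+\sigma^2\varGamma_{\mc D}^2)$.

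It remains to choose the free parameters so that the two error terms are each at most $\tfrac\varepsilon2\lambda_{\mc D}\le\tfrac\varepsilon2\E_{\mc D}q(\mathbf h)$ for $\mathbf h\in\mc S$. Taking $\sigma/\mu\asymp\varepsilon\lambda_{\mc D}/\varGamma_{\mc D}$ drives $c_1\tfrac\sigma\mu\varGamma_{\mc D}$ below $\tfrac\varepsilon2\lambda_{\mc D}$ and fixes $\mathrm{KL}\asymp\varGamma_{\mc D}^2/(\varepsilon^2\lambda_{\mc D}^2)$; optimizing $\theta$ in $F$ gives $F\asymp\mu\sqrt{(\mathrm{KL}+\log(1/\delta))\,\eta_{\mc D}^2\varLambda_{\mc D}^2/n}$, and imposing $F/\mu\le\tfrac\varepsilon2\lambda_{\mc D}$ yields the claimed lower bound on $n$, the two branches of the maximum corresponding to whether $\mathrm{KL}$ or $\log(1/\delta)$ dominates. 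I expect the main obstacle to be precisely this quantitative balancing: because the PAC-Bayes step fixes a single $\theta$ and a single variance surrogate for all $\mathbf h$, the variance must be controlled by $\varLambda_{\mc D}$ rather than the pointwise $\E_{\mc D}q(\mathbf h)$, and tracking how this uniform surrogate, the homogeneity error proportional to $\varGamma_{\mc D}/\lambda_{\mc D}$, and the $\varepsilon$-dependence of the bandwidth combine is exactly what produces the condition-number factor $\varLambda_{\mc D}^2/\lambda_{\mc D}^2$, the effective-dimension factor $\varGamma_{\mc D}^2/(\lambda_{\mc D}\varLambda_{\mc D})$, and the $\varepsilon^{-4}$ scaling; the one genuinely delicate analytic input is that the second-moment assumption \eqref{eq:tail-weight} alone suffices for a clean sub-Gaussian-type lower-tail bound with no range term.
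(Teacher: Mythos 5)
Your architecture is essentially the paper's own: Gaussian posteriors centered on (a multiple of) $\mb h$ with isotropic prior, KL cost $\mu^{2}/(2\sigma^{2})$, the one-sided inequality $e^{-u}\le 1-u+\frac{1}{2}u^{2}$ combined with \eqref{eq:tail-weight} to get a variance-only log-MGF bound, and the smoothing comparison via \eqref{eq:triangle-like} and $\varGamma_{\mc D}$; and you are right that the negative-side exponential moment exists without truncation (and Tonelli handles the exchange of expectations). The genuine gap is your ``elementary concentration step'' for the smoothing-correction term $\frac{1}{n}\sum_{i=1}^{n}\E_{\mb g}q_{i}(\mb g)$. These variables equal $\frac{1}{2}\sqrt{2/\pi}\,\bigl\lVert\nabla f_{i}^{+}(\mb x_{\star})-\nabla f_{i}^{-}(\mb x_{\star})\bigr\rVert$, and the model grants them only a finite second moment ($\le\varGamma_{\mc D}^{2}$ by \eqref{eq:sensitivity}); the framework deliberately allows heavy tails, so nothing bounds them or any higher moment. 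For the \emph{plain} empirical mean of nonnegative variables with only a second moment, upper-tail control at confidence $1-\delta/2$ is available only at the Chebyshev/Markov rate: staying within a constant factor of $\varGamma_{\mc D}$ forces $n\gtrsim 1/\delta$, polynomial rather than the $\log\frac{2}{\delta}$ the proposition asserts (sub-Gaussian-rate mean estimation at this confidence requires a modified estimator --- truncation, Catoni, or median-of-means --- not the raw average your bound contains). This is exactly why the paper truncates \emph{inside} the smoothing: working with $w_{i}$ in \eqref{eq:wi}, the correction term becomes $\frac{1}{n}\sum_{i=1}^{n}\E_{\mb z\sim\gamma_{\mb 0}}\bigl(\left[\alpha q_{i}(\mb z)\right]_{\le1}\bigr)$, an average of $[0,1]$-valued variables, and Hoeffding (Lemma~\ref{lem:trucated2empirical}) gives the $\frac{1}{\alpha}\sqrt{\log\frac{2}{\delta}/(2n)}$ deviation. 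So the truncation's essential role is not legitimizing the MGF (your instinct is correct there) but precisely this step, which your proposal replaces with a claim that fails at the stated confidence level.

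A secondary, quantitative issue: you replace the smoothed second moment by the uniform surrogate $\mu^{2}\varLambda_{\mc D}^{2}+\sigma^{2}\varGamma_{\mc D}^{2}$ and then compare every error term to $\frac{\varepsilon}{2}\lambda_{\mc D}$. Carrying out your own balancing, the second branch becomes $n\gtrsim\eta_{\mc D}^{2}\varGamma_{\mc D}^{2}\varLambda_{\mc D}^{2}\lambda_{\mc D}^{-4}\varepsilon^{-4}$, which exceeds the claimed $\frac{\varGamma_{\mc D}^{2}}{\lambda_{\mc D}\varLambda_{\mc D}}\cdot\frac{\varLambda_{\mc D}^{2}}{\lambda_{\mc D}^{2}}\eta_{\mc D}^{2}\varepsilon^{-4}$ by the factor $\kappa=\varLambda_{\mc D}/\lambda_{\mc D}$, which is not an absolute constant; as written you therefore prove the proposition only under an inflated sample size. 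The paper avoids this loss by keeping the variance \emph{pointwise} (Lemma~\ref{lem:variance-term}: $\sqrt{\E_{\mb z\sim\gamma_{\mb h}}\E_{\mc D}q_{1}^{2}(\mb z)}\le\eta_{\mc D}\E_{\mc D}(q_{1}(\mb h))+\sigma\varGamma_{\mc D}$) and absorbing $\alpha\eta_{\mc D}^{2}\bigl(\E_{\mc D}(q_{1}(\mb h))\bigr)^{2}$ multiplicatively into $(1-\frac{\varepsilon}{2})\E_{\mc D}(q_{1}(\mb h))$, which only requires $\alpha\eta_{\mc D}^{2}\varLambda_{\mc D}\lesssim\varepsilon$. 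This part is repairable within your scheme --- bound the smoothed second moment by $2\mu^{2}\bigl(\E_{\mc D}q(\mb h)\bigr)^{2}+2\sigma^{2}\varGamma_{\mc D}^{2}$ and compare the variance term to $\E_{\mc D}q(\mb h)$ rather than to $\lambda_{\mc D}$ --- but the $\delta$-dependence gap of the first paragraph requires the truncation (or an equivalent robust-mean device) and is not a matter of bookkeeping.
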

The proof of this proposition is provided in Section \ref{sec:main-proofs}.

\subsection{Related work}

In a prior work \citep{Bahmani2017Anchored}, we considered the ``convex
regression'' model, a special case of \eqref{eq:dc-reg} with purely
convex nonlinearities (i.e., $f_{i}^{-}\equiv0$ and $f_{i}^{+}\equiv f_{i}$
for convex functions $f_{i}$). With a slightly weaker approximation
oracle that produces an \emph{anchor }$\mb a_{0}$ for which $\langle\mb a_{0},\mb x_{\star}\rangle/\left\lVert \mb a_{0}\right\rVert \left\lVert \mb x_{\star}\right\rVert $
is nonvanishing, statistical accuracy of estimation via the convex
program 
\begin{align*}
\argmax_{\mb x}\  & \langle\mb a_{0},\mb x\rangle\\
\text{subject to}\  & \frac{1}{n}\sum_{i=1}^{n}\max\left\{ f_{i}(\mb x)-y_{i},0\right\} \le\text{average noise}\,,
\end{align*}
is studied in \citep{Bahmani2017Anchored}. The effect of convex regularization
(e.g., $\ell_{1}$-regularization) in structured estimation (e.g.,
sparse estimation) is also considered and analyzed in \citep{Bahmani2017Anchored}.
Evidently, the solution of the convex program above is insensitive
to (positive) scaling of the anchor $\mb a_{0}$. The estimator \eqref{eq:AnchoredERM}
is, however, sensitive to the scaling of $\mb a_{0}$ which is a main
reason for the need for a slightly stronger approximation oracle in
this paper. An interesting example where the described convex regression
applies is the \emph{phase retrieval} problem that was previously
studied in \citep{Bahmani2016Phase,Bahmani2017Flexible,Goldstein2017Convex,Goldstein2018PhaseMax}.

As will be seen in Section \ref{sec:bilinear-regression}, bilinear
regression can be modeled by \eqref{eq:dc-reg} as well. Succinctly, the goal
in a bilinear regression problem is to recover signal components $\mb x^{(1)}$
and $\mb x^{(2)}$, up to the inevitable scaling ambiguity, from bilinear
observations of the form $\langle\mb a_{i}^{(1)},\mb x^{(1)}\rangle\langle\mb a_{i}^{(2)},\mb x^{(2)}\rangle$
for $i=1,\dotsc,n$. In the context of the closely related \textit{blind deconvolution} problem, solving
such a system of bilinear equations in the \textit{lifted domain}
through nuclear-norm minimization has been analyzed in \citep{Ahmed2014Blind}
and \citep{Bahmani2015Lifting}. Despite their accuracy guarantees,
the nuclear-norm minimization methods are practically not scalable
to large problem sizes which motivated the analysis of nonconvex
techniques (see, e.g., \citep{Li2018Rapid,Ling2018Regularized,Ma2017Implicit}).
Inspired by the results on the phase retrieval problem mentioned above,
\citep{Aghasi2017BranchHull} proposed and analyzed a convex program
for bilinear regression that operates in the natural space of the
signals, thereby avoiding the prohibitive computational cost of the
lifted convex formulations. Unlike the mentioned methods for phase
retrieval that only require a (directional) approximation of the ground
truth, the proposed estimator in \citep{Aghasi2017BranchHull} requires
the exact knowledge of the signs of all of the multiplied linear forms
$\langle\mb a_{i}^{(1)},\mb x_{\star}^{(1)}\rangle$ (or $\langle\mb a_{i}^{(2)},\mb x_{\star}^{(2)}\rangle$).
This requirement is rather strong and may severely limit the applicability
of the considered method. In Section \ref{sec:bilinear-regression},
we look into the problem of bilinear regression as a special case
of the general regression problem \eqref{eq:dc-reg}; under the common
Gaussian model for the measurement vectors we derive the sample complexity
of \eqref{eq:AnchoredERM} and explain an efficient method to construct
an admissible vector $\mb a_{0}$ only using the given observations.

\section{\label{sec:main-proofs}Main proofs}

There are various techniques under the umbrella of empirical process
theory that can be employed to establish Proposition \ref{pro:master}
and thereby Theorem \ref{thm:generic}. For instance, techniques relying
on the concepts of VC dimension \citep{Vapnik1971Uniform,Vapnik1998Statistical}
or Rademacher complexity \citep{Koltchinskii2000RademacherProcesses,Koltchinskii2001Rademacher,Bousquet2002Local}
including the small-ball method \citep{Koltchinskii2015Bounding,Mendelson2014Learning,Mendelson2015Learning}
that are primarily developed in the field of statistical learning
theory. 
However, some techniques, such as the small-ball method, are designed particularly to  handle the type of heavy-tailed data we consider in our model. In this paper, we use another common technique, the PAC-Bayesian (or pseudo-Bayesian) method, that is suitable for heavy-tailed data. This method, proposed in
\citep{McAllester1999PAC-Bayesian}, has been used previously for
establishing various generalization bounds for classification \citep[see e.g.,][]{McAllester1999PAC-Bayesian,Langford2003PAC-Bayesian,Germain2009PAC-Beyesian}
and accuracy in regression problems \citep{Alquier2008PAC-Bayesian,Audibert2011Robust,Alquier2011PAC-Bayesian,Oliveira2016LowerTail,Catoni2017Dimension-free}.
The bounds obtained using this technique appear in different forms;
we refer the interested reader to the survey paper \citep{McAllester2013}
and the monograph \citep{Catoni2007PAC-Bayesian} for a broader view
of the related results and techniques. 
Compared to the small-ball method, the PAC-Bayesian argument does not rely on the  \emph{symmetrization} \cite[Lemma 2.3.1]{vanderVaart1996Weak} and \emph{Rademacher contraction} \cite[Theorem 4.12]{Ledoux2013Probability} ideas and has a more elementary nature.

Our analysis below in Section \ref{ssec:PAC-Bayesian} parallels that
used in \citep{Oliveira2016LowerTail} which in turn was inspired
by \citep{Audibert2011Robust}. The technical tools we use can be
found in the PAC-Bayesian literature; we provide the proofs to make
the manuscript self-contained. We emphasize that the novelty of this
work is the general regression model \eqref{eq:dc-reg} and the computationally
efficient estimator \eqref{eq:AnchoredERM} rather than the methods
of analysis.

The core idea in the PAC-Bayes theory is the variational inequality\footnote{While this inequality is sometimes interpreted using the Fenchel\textendash Legendre
transform, it is simply a Jensen's inequality in disguise.} 
\begin{align}
\E_{\mb z\sim\mu}R(\mb z) & \le\log\E_{\mb z\sim\nu}\exp(R(\mb z))+D_{\mr{KL}}(\mu,\nu)\,,\label{eq:glorified-Jensen}
\end{align}
 where $D_{\mr{KL}}(\mu,\nu)=\E_{\mb z\sim\mu}\left(\log\frac{\d\mu(\mb z)}{\d\nu(\mb z)}\right)$
denotes the Kullback-Leibler divergence (or relative entropy) between
probability measures $\mu$ and $\nu$ with $\mu\ll\nu$. In PAC-Bayesian
analyses, the fact that this bound is deterministic and holds for
any probability measure $\mu\ll\nu$ is leveraged to control the supremum
of stochastic processes. In particular, for a stochastic process $R(\cdot)$
with the domain $\mc X$, we may \emph{approximate} $\sup_{\mb x\in\mc X}\ R(\mb x)$
by the supremum of $\sup_{\mu_{\mb x}:\mb x\in\mc X}\ \E_{\mb z\sim\mu_{\mb x}}R(\mb z)$
with respect to a certain set of probability measures $\mu_{\mb x}$ indexed by the elements of $\mc X$ (e.g., $\E_{\mb z\sim\mu_{\mb x}}(\mb z)=\mb x$).
Then, under some regularity conditions on the stochastic process,
the approximate bound can be converted to an exact bound.

\subsection{\label{ssec:PAC-Bayesian}A PAC-Bayesian proof of Proposition \ref{pro:master}}
We use the PAC-Bayesian analysis to establish Proposition \ref{pro:master},
the main ingredient in proving the accuracy of \eqref{eq:AnchoredERM}.
\begin{proof}[Proof of Proposition \ref{pro:master}]
 For $i=1,\dotsc,n$, let 
\begin{align}
w_{i}(\mb z) & \defeq\log\left(1-\left[\alpha q_{i}(\mb z)\right]_{\le1}+\frac{1}{2}\left[\alpha q_{i}(\mb z)\right]_{\le1}^{2}\right)\,,\label{eq:wi}
\end{align}
where $\left[u\right]_{\le1}\defeq\min\left(u,1\right)$ and $\alpha>0$
is a normalizing factor to be specified later. 
As it becomes clear below, the function $w_i(\mb{z})$ should be viewed as an approximation for $\alpha q_i(\mb{z})$ that serves two purposes in the PAC-Bayesian argument. First, the use of the logarithm leads to \emph{cumulant generating functions} that can be relatively easily approximated by $-\E_{\mc{D}}(\alpha q_i(\mb{z}))$. Second, the use of the truncation legitimizes the evaluation of moment generating functions and also allows us to have bounded the deviations caused by the parameter perturbation in the PAC-Bayesian argument.

 Let $\gamma_{\mb h}$
denote the normal distribution with mean $\mb h$ and covariance $\sigma^{2}\mb I$
for a parameter $\sigma$. By \eqref{eq:glorified-Jensen}, for every
$\mb h\in\mc S\subseteq\mbb S^{d-1}$, we have

\begin{equation}
\begin{aligned}
 & \E_{\mb z\sim\gamma_{\mb h}}\left(\sum_{i=1}^{n}w_{i}(\mb z)-n\log\E_{\mc D}\exp\left(w_{1}(\mb z)\right)\right) \\
 & \le\log\E_{\mb z\sim\gamma_{\mb 0}}\exp\left(\sum_{i=1}^{n}w_{i}(\mb z)-n\log\E_{\mc D}\exp\left(w_{1}(\mb z)\right)\right) +\frac{\sigma^{-2}}{2}\,.
\end{aligned}
\label{eq:variational-bound}
\end{equation}
 Furthermore, by Markov's inequality with probability $\ge1-\delta/2$
we have 
\begin{align*}
 & \E_{\mb z\sim\gamma_{\mb 0}}\exp\left(\sum_{i=1}^{n}w_{i}(\mb z)-n\log\E_{\mc D}\exp\left(w_{1}(\mb z)\right)\right)\\
 & \le\frac{2}{\delta}\E_{\mc D^{n}}\E_{\mb z\sim\gamma_{\mb 0}}\exp\left(\sum_{i=1}^{n}w_{i}(\mb z)-n\log\E_{\mc D}\exp\left(w_{1}(\mb z)\right)\right)\\
 & =\frac{2}{\delta}\E_{\mb z\sim\gamma_{\mb 0}}\E_{\mc D^{n}}\exp\left(\sum_{i=1}^{n}w_{i}(\mb z)-n\log\E_{\mc D}\exp\left(w_{1}(\mb z)\right)\right)\\
 & =\frac{2}{\delta}\,,
\end{align*}
 where the exchange of expectations on the second line is valid as
the argument is a bounded function. Therefore, on the same event because
of \eqref{eq:variational-bound} for every $\mb h\in\mc S$ we have
\begin{align*}
\E_{\mb z\sim\gamma_{\mb h}}\left(\sum_{i=1}^{n}w_{i}(\mb z)-n\log\E_{\mc D}\exp\left(w_{1}(\mb z)\right)\right) & \le\frac{\sigma^{-2}}{2}+\log\frac{2}{\delta}\,,
\end{align*}
or equivalently 
\begin{align}
\frac{1}{n}\sum_{i=1}^{n}\E_{\mb z\sim\gamma_{\mb h}}w_{i}(\mb z) & \le\E_{\mb z\sim\gamma_{\mb h}}\left(\log\E_{\mc D}\exp(w_{1}(\mb z))\right)+\frac{\sigma^{-2}+2\log\frac{2}{\delta}}{2n}\,.\label{eq:bound-1}
\end{align}
The definition \eqref{eq:wi} and the facts that $-u\le\log(1-u+\frac{1}{2}u^{2})\le-u+\frac{1}{2}u^{2}$
and $1-\left[u\right]_{\le1}+\frac{1}{2}\left[u\right]_{\le1}^{2}\le1-u+\frac{1}{2}u^{2}$,
for all $u\ge0$, imply the bounds 
\begin{align*}
\E_{\mb z\sim\gamma_{\mb h}}w_{i}(\mb z) & \ge-\E_{\mb z\sim\gamma_{\mb h}}\left(\left[\alpha q_{i}(\mb z)\right]_{\le1}\right)\,,
\end{align*}
 and
\begin{align*}
\E_{\mb z\sim\gamma_{\mb h}}\log\E_{\mc D}\exp(w_{1}(\mb z)) & =\E_{\mb z\sim\gamma_{\mb h}}\log\left(\!1-\E_{\mc D}\left(\left[\alpha q_{1}(\mb z)\right]_{\le1}\right)+\frac{1}{2}\E_{\mc D}\left(\left[\alpha q_{1}(\mb z)\right]_{\le1}^{2}\right)\!\right)\\
 & \le-\E_{\mb z\sim\gamma_{\mb h}}\E_{\mc D}\left(\alpha q_{1}(\mb z)\right)+\frac{1}{2}\E_{\mb z\sim\gamma_{\mb h}}\E_{\mc D}\left(\alpha^{2}q_{1}^{2}(\mb z)\right)\,.
\end{align*}
Using \eqref{eq:triangle-like}, \eqref{eq:sensitivity}, and the
Cauchy-Schwarz inequality, we also have
\begin{align*}
\E_{\mb z\sim\gamma_{\mb h}}\E_{\mc D}\left(q_{1}(\mb z)\right) & \ge\E_{\mb z\sim\gamma_{\mb h}}\E_{\mc D}\left(q_{1}(\mb h)-q_{1}(\mb h-\mb z)\right)\\
 & =\E_{\mc D}\left(q_{1}(\mb h)\right)-\E_{\mb z\sim\gamma_{\mb 0}}\E_{\mc D}\left(q_{1}(\mb z)\right)\\
 & \ge\E_{\mc D}\left(q_{1}(\mb h)\right)-\sigma\varGamma_{\mc D}\,.
\end{align*}
Thus, it follows from \eqref{eq:bound-1} that

\begin{equation}
\begin{aligned}\frac{1}{n}\sum_{i=1}^{n}\frac{1}{\alpha}\E_{\mb z\sim\gamma_{\mb h}}\left(\left[\alpha q_{i}(\mb z)\right]_{\le1}\right) & \ge\E_{\mc D}\left(q_{1}(\mb h)\right)-\sigma\varGamma_{\mc D}\\
 & \phantom{\ge}-\frac{\alpha}{2}\E_{\mb z\sim\gamma_{\mb h}}\E_{\mc D}\left(q_{1}^{2}(\mb z)\right)-\frac{\sigma^{-2}+2\log\frac{2}{\delta}}{2\alpha n}\,.
\end{aligned}
\label{eq:bound-2}
\end{equation}
 Applying Lemmas \ref{lem:trucated2empirical} and \ref{lem:variance-term},
stated and proved in the appendix, to \eqref{eq:bound-2} shows that
for all $\mb h\in\mc S$, with probability $\ge1-\delta$, we have
\begin{align*}
 & \frac{1}{n}\sum_{i=1}^{n}q_{i}(\mb h)+\sigma\varGamma_{\mc D}+\frac{1}{\alpha}\sqrt{\frac{\log\frac{2}{\delta}}{2n}}\\
 & \ge\E_{\mc D}\left(q_{1}(\mb h)\right)-\sigma\varGamma_{\mc D}\\
 & \phantom{\ge}-\frac{\alpha}{2}\E_{\mb z\sim\gamma_{\mb h}}\E_{\mc D}\left(q_{1}^{2}(\mb z)\right)-\frac{\sigma^{-2}+2\log\frac{2}{\delta}}{2\alpha n}\\
 & \ge\E_{\mc D}\left(q_{1}(\mb h)\right)-\sigma\varGamma_{\mc D}-\frac{\alpha}{2}\left(\eta_{\mc D}\E_{\mc D}\left(q_{1}(\mb h)\right)+\sigma\varGamma_{\mc D}\right)^{2}\\
 & \phantom{\ge}-\frac{\sigma^{-2}+2\log\frac{2}{\delta}}{2\alpha n}\\
 & \ge\E_{\mc D}\left(q_{1}(\mb h)\right)-\sigma\varGamma_{\mc D}-\alpha\left(\eta_{\mc D}^{2}\left(\E_{\mc D}\left(q_{1}(\mb h)\right)\right)^{2}+\sigma^{2}\varGamma_{\mc D}^{2}\right)\\
 & \phantom{\ge}-\frac{\sigma^{-2}+2\log\frac{2}{\delta}}{2\alpha n}\,,
\end{align*}
By rearranging the terms, we reach at
\begin{align*}
\frac{1}{n}\sum_{i=1}^{n}q_{i}(\mb h) & \ge\E_{\mc D}\left(q_{1}(\mb h)\right)\\
 & \phantom{\ge}-2\sigma\varGamma_{\mc D}-\alpha\left(\eta_{\mc D}^{2}\left(\E_{\mc D}\left(q_{1}(\mb h)\right)\right)^{2}+\sigma^{2}\varGamma_{\mc D}^{2}\right)\\
 & \phantom{\ge}-\frac{1}{\alpha}\left(\sqrt{\frac{\log\frac{2}{\delta}}{2n}}+\frac{\sigma^{-2}+2\log\frac{2}{\delta}}{2n}\right)\,.
\end{align*}
 Recalling \eqref{eq:small-lambda}, \eqref{eq:big-lambda}, and \eqref{eq:sensitivity},
we can choose
\begin{align*}
\sigma & =\frac{\varepsilon\lambda_{\mc D}}{4\varGamma_{\mc D}}\,,
\end{align*}
 and 
\begin{align*}
\alpha & =\lambda_{\mc D}\left(\sqrt{\frac{\log\frac{2}{\delta}}{2n}}+\frac{16\varGamma_{\mc D}^{2}\lambda_{\mc D}^{-2}\varepsilon^{-2}+2\log\frac{2}{\delta}}{2n}\right)^{1/2}\left(\frac{\varLambda_{\mc D}}{\lambda_{\mc D}}\right)^{-1/2}\,,
\end{align*}
 to obtain 
\begin{align*}
 \frac{1}{n}\sum_{i=1}^{n}q_{i}(\mb h) & \ge\E_{\mc D}\left(q_{1}(\mb h)\right)-\frac{\varepsilon\lambda_{\mc D}}{2}\\
 &\phantom{\ge} -2\lambda_{\mc D}\left(\sqrt{\frac{\log\frac{2}{\delta}}{2n}}+\frac{16\varGamma_{\mc D}^{2}\lambda_{\mc D}^{-2}\varepsilon^{-2}+2\log\frac{2}{\delta}}{2n}\right)^{1/2}\left(\frac{\varLambda_{\mc D}}{\lambda_{\mc D}}\eta_{\mc D}^{2}+\frac{\varepsilon^{2}}{6}\right)^{1/2}\\
 & \ge\left(1-\frac{\varepsilon}{2}\right)\E_{\mc D}(q_{1}(\mb h))\\
 & \phantom{\ge} -2\lambda_{\mc D}\!\left(\sqrt{\frac{\log\frac{2}{\delta}}{2n}}+\frac{16\varGamma_{\mc D}^{2}\lambda_{\mc D}^{-2}\varepsilon^{-2}+2\log\frac{2}{\delta}}{2n}\right)^{1/2}\left(\frac{\varLambda_{\mc D}}{\lambda_{\mc D}}\eta_{\mc D}^{2}+\frac{\varepsilon^{2}}{6}\right)^{1/2}\,.
\end{align*}
 It is then straightforward to deduce 
\begin{align*}
\frac{1}{n}\sum_{i=1}^{n}q_{i}(\mb h) & \ge(1-\varepsilon)\,\E_{\mc D}\left(q_{1}(\mb h)\right)\,,
\end{align*}
assuming
\begin{align*}
n & \gtrsim\max\left\{ \eta_{\mc D}^{2}\log\frac{2}{\delta},\:\frac{\varGamma_{\mc D}^{2}}{\lambda_{\mc D}\varLambda_{\mc D}}\right\} \frac{\varLambda_{\mc D}^{2}}{\lambda_{\mc D}^{2}}\eta_{\mc D}^{2}\varepsilon^{-4}\,,
\end{align*}
 with a sufficiently large hidden constant.
\end{proof}

\subsection{\label{ssec:Proof-Accuracy-Lem}Proof of Lemma \ref{lem:accuracy}}

Below we provide a proof of Lemma \ref{lem:accuracy}.
\begin{proof}[Proof of Lemma \ref{lem:accuracy}]
By optimality of $\widehat{\mb x}$ in \eqref{eq:AnchoredERM} we
have

\begin{align*}
\frac{1}{n}\sum_{i=1}^{n}\max\left\{ f_{i}^{+}\left(\widehat{\mb x}\right)-y_{i},f_{i}^{-}\left(\widehat{\mb x}\right)\right\}  & \le\frac{1}{n}\sum_{i=1}^{n}\max\left\{ f_{i}^{+}\left(\mb x_{0}\right)-y_{i},f_{i}^{-}\left(\mb x_{0}\right)\right\} +\langle\mb a_{0},\widehat{\mb x}-\mb x_{0}\rangle\\
 & =\frac{1}{n}\sum_{i=1}^{n}f_{i}^{-}\left(\mb x_{0}\right)+\max\left\{ -\xi_{i},0\right\} +\langle\mb a_{0},\widehat{\mb x}-\mb x_{0}\rangle\,.
\end{align*}
For $i=1,\dotsc,n$ let $y_{\star i}=y_{i}-\xi_{i}=f_{i}^{+}(\mb x_{\star})-f_{i}^{-}(\mb x_{\star})=f_{i}^{+}(\mb x_{0})-f_{i}^{-}(\mb x_{0})$
and observe that 
\[
\max\left\{ f_{i}^{+}\left(\widehat{\mb x}\right)-y_{\star i},f_{i}^{-}\left(\widehat{\mb x}\right)\right\} \le\max\left\{ f_{i}^{+}\left(\widehat{\mb x}\right)-y_{i},f_{i}^{-}\left(\widehat{\mb x}\right)\right\} +\max\left\{ \xi_{i},0\right\} \,.
\]
 Therefore, we deduce that
\begin{align*}
 & \frac{1}{n}\sum_{i=1}^{n}\max\left\{ f_{i}^{+}\left(\widehat{\mb x}\right)-y_{\star i},f_{i}^{-}\left(\widehat{\mb x}\right)\right\} \\
 & \le\frac{1}{n}\sum_{i=1}^{n}f_{i}^{-}\left(\mb x_{0}\right)+\max\left\{ \xi_{i},0\right\} +\max\left\{ -\xi_{i},0\right\} +\langle\mb a_{0},\widehat{\mb x}-\mb x_{0}\rangle\\
 & =\frac{1}{n}\sum_{i=1}^{n}f_{i}^{-}\left(\mb x_{0}\right)+\left|\xi_{i}\right|+\langle\mb a_{0},\widehat{\mb x}-\mb x_{0}\rangle\,,
\end{align*}
or equivalently 
\begin{align*}
\frac{1}{n}\sum_{i=1}^{n}\max\left\{ f_{i}^{+}\left(\widehat{\mb x}\right)-f_{i}^{+}(\mb x_{0}),f_{i}^{-}\left(\widehat{\mb x}\right)-f_{i}^{-}\left(\mb x_{0}\right)\right\}  & \le\frac{1}{n}\sum_{i=1}^{n}\left|\xi_{i}\right|+\langle \mb a_{0},\widehat{\mb x}-\mb x_{0}\rangle \,.
\end{align*}
Invoking the assumption \eqref{eq:ApproxOracle-x0} and using Cauchy-Schwarz
inequality we can write 
\begin{align*}
& \frac{1}{n}\sum_{i=1}^{n}\max\left\{ f_{i}^{+}\left(\widehat{\mb x}\right)-f_{i}^{+}(\mb x_{0}),f_{i}^{-}\left(\widehat{\mb x}\right)-f_{i}^{-}\left(\mb x_{0}\right)\right\}\\
  & \le\frac{1}{n}\sum_{i=1}^{n}\left|\xi_{i}\right|+\langle\mb a_{0}-\frac{1}{2n}\sum_{i=1}^{n}\nabla f_{i}^{+}(\mb x_{0})+\nabla f_{i}^{-}(\mb x_{0}),\widehat{\mb x}-\mb x_{0}\rangle\\
 & \phantom{\le}+\langle\frac{1}{2n}\sum_{i=1}^{n}\nabla f_{i}^{+}(\mb x_{0})+\nabla f_{i}^{-}(\mb x_{0}),\widehat{\mb x}-\mb x_{0}\rangle\\
 & \le\frac{1}{n}\sum_{i=1}^{n}\left|\xi_{i}\right|+\frac{1-\varepsilon}{2}\lambda_{\mc D}\left\lVert \widehat{\mb x}-\mb x_{0}\right\rVert +\langle\frac{1}{2n}\sum_{i=1}^{n}\nabla f_{i}^{+}(\mb x_{0})+\nabla f_{i}^{-}(\mb x_{0}),\widehat{\mb x}-\mb x_{0}\rangle\,.
\end{align*}
Rearranging the terms gives the equivalent inequality 
\begin{align*}
 & \frac{1}{n}\sum_{i=1}^{n}\max\left\{ f_{i}^{+}\left(\widehat{\mb x}\right)-f_{i}^{+}(\mb x_{0}),f_{i}^{-}\left(\widehat{\mb x}\right)-f_{i}^{-}\left(\mb x_{0}\right)\right\} -\frac{1}{2}\langle\nabla f_{i}^{+}(\mb x_{0})+\nabla f_{i}^{-}(\mb x_{0}),\widehat{\mb x}-\mb x_{0}\rangle\\
 & \le\frac{1}{n}\sum_{i=1}^{n}\left|\xi_{i}\right|+\frac{1-\varepsilon}{2}\lambda_{\mc D}\left\lVert \widehat{\mb x}-\mb x_{0}\right\rVert \,.
\end{align*}
Observe that 
\begin{align*}
 & \max\left\{ f_{i}^{+}\left(\widehat{\mb x}\right)-f_{i}^{+}(\mb x_{0}),f_{i}^{-}\left(\widehat{\mb x}\right)-f_{i}^{-}\left(\mb x_{0}\right)\right\} -\frac{1}{2}\langle\nabla f_{i}^{+}(\mb x_{0})+\nabla f_{i}^{-}(\mb x_{0}),\widehat{\mb x}-\mb x_{0}\rangle\\
 & \ge\frac{1}{2}\left|\langle\nabla f_{i}^{+}(\mb x_{0})+\nabla f_{i}^{-}(\mb x_{0}),\widehat{\mb x}-\mb x_{0}\rangle\right|\,.
\end{align*}
Using the assumption that \eqref{eq:empirical-process-LB} holds,
we obtain 
\begin{align*}
 & \frac{1}{n}\sum_{i=1}^{n}\max\left\{ f_{i}^{+}\left(\widehat{\mb x}\right)-f_{i}^{+}(\mb x_{0}),f_{i}^{-}\left(\widehat{\mb x}\right)-f_{i}^{-}\left(\mb x_{0}\right)\right\} -\frac{1}{2}\langle\nabla f_{i}^{+}(\mb x_{0})+\nabla f_{i}^{-}(\mb x_{0}),\widehat{\mb x}-\mb x_{0}\rangle\\
 & \ge\frac{1}{2n}\sum_{i=1}^{n}\left|\langle\nabla f_{i}^{+}(\mb x_{0})+\nabla f_{i}^{-}(\mb x_{0}),\widehat{\mb x}-\mb x_{0}\rangle\right|\\
 & \ge\left(1-\frac{\varepsilon+\varepsilon_{0}}{2}\right)\lambda_{\mc D}\left\lVert \widehat{\mb x}-\mb x_{0}\right\rVert \,.
\end{align*}
Therefore, we conclude that 
\begin{align*}
\left(1-\frac{\varepsilon+\varepsilon_{0}}{2}\right)\lambda_{\mc D}\left\lVert \widehat{\mb x}-\mb x_{0}\right\rVert  & \le\frac{1}{n}\sum_{i=1}^{n}\left|\xi_{i}\right|+\frac{1-\varepsilon}{2}\lambda_{\mc D}\left\lVert \widehat{\mb x}-\mb x_{0}\right\rVert \,,
\end{align*}
which, since $\varepsilon_{0}\in[0,1)$, is equivalent to 
\begin{align*}
\left\lVert \widehat{\mb x}-\mb x_{0}\right\rVert  & \le\frac{\frac{1}{n}\sum_{i=1}^{n}\left|\xi_{i}\right|}{\frac{1-\varepsilon_{0}}{2}\lambda_{\mc D}}\,.
\end{align*}
\end{proof}

\section{\label{sec:bilinear-regression}Application to bilinear regression}
In this section we apply the general result above to the problem of bilinear regression.
Suppose that the vectors $\mb x_{\star}^{(1)}$ and $\mb x_{\star}^{(2)}$
are observed through the bilinear measurements 
\begin{align}
y_{i} & =\langle\mb a_{i}^{(1)},\mb x_{\star}^{(1)}\rangle\langle\mb x{}_{\star}^{(2)},\mb a_{i}^{(2)}\rangle, & i=1,\dotsc,n\,,\label{eq:bilinear-eqs}
\end{align}
 with known vector pairs $(\mb a_{i}^{(1)},\mb a_{i}^{(2)})$. In bilinear regression, the goals is to recover $\mb{x}^{(1)}_\star$ and $\mb{x}_\star^{(2)}$ (up to the inherent ambiguities) from the above measurements.
 
To apply our general framework, we introduce an equivalent formulation of the bilinear observations that is compatible with the DC observation model of \eqref{eq:dc-reg}. Let $\mb x_{\star}$ denote the concatenation of $\mb x_{\star}^{(1)}\in\mbb R^{d_{1}}\backslash\{\mb 0\}$
and $\mb x_{\star}^{(2)}\in\mbb R^{d_{2}}\backslash\{\mb 0\}$. Similarly,
for $i=1,\dotsc,n$ let $\mb a_{i}^{\pm}$
denote the concatenation of $\mb a_{i}^{(1)}$ and $\pm\mb a_{i}^{(2)}$. It is easy
to verify that the bilinear measurements above can also be expressed
in the form
\begin{align*}
y_{i} & =\frac{1}{4}\left|\langle \mb a_{i}^{+},\mb x_{\star}\rangle \right|^{2}-\frac{1}{4}\left|\langle \mb a_{i}^{-},\mb x_{\star}\rangle \right|^{2}\,,
\end{align*}
which is a special case of the DC observation model \eqref{eq:dc-reg}
with 
\begin{align}
f_{i}^{+}\left(\mb x\right) & =\frac{1}{4}\left|\langle \mb a_{i}^{+},\mb x\rangle \right|^{2} &  & \text{and} & f_{i}^{-}\left(\mb x\right) & =\frac{1}{4}\left|\langle \mb a_{i}^{-},\mb x\rangle \right|^{2}\,.\label{eq:quads}
\end{align}
The problem setup and additional notations are as follows. Denote the $\ell\times\ell$ identity matrix by $\mb I_{\ell}$. For $k=1,2$, let $\mb a_{1}^{(k)},\mb a_{2}^{(k)},\dotsc,\mb a_{n}^{(k)}$
be i.i.d. copies of $\mb a^{(k)}\sim\mr{Normal}(\mb 0,\mb I_{d_{k}})$ with $\mb{a}_i^{(1)}$ and $\mb{a}_i^{(2)}$ also drawn independently for all $1\le i\le n$.
Similar to the definition of $\mb{a}_i^\pm$s above, we also denote the concatenation of $\mb a^{(1)}$ and $\pm\mb a^{(2)}$
by $\mb a^{\pm}$. The functions $f^{\pm}$ are defined
analogous to $f_{i}^{\pm}$ with $\mb a^{\pm}$ replacing $\mb a_{i}^{\pm}$.
For brevity, we set $d=d_{1}+d_{2}$. Furthermore, some of the unspecified
constants in the derivations below are overloaded and may take different
values from line to line. For any vector $\mb x\in\mbb R^{d}$ with
partitions as $\mb x^{(1)}\in\mbb R^{d_{1}}$ and $\mb x^{(2)}\in\mbb R^{d_{2}}$,
we use the notation $\mb x^{-}$ to denote the concatenation of $\mb x^{(1)}$
and $-\mb x^{(2)}$.

Evidently, any reciprocal scaling of $\mb x_{\star}^{(1)}$ and $\mb x_{\star}^{(2)}$
is also consistent with the bilinear measurements \eqref{eq:bilinear-eqs}
and will be considered a valid solution. Throughout this section,
we choose $\mb x_{\star}$ to be a ``balanced'' solution meaning that
$\left\lVert \mb x_{\star}^{(1)}\right\rVert =\left\lVert \mb x_{\star}^{(2)}\right\rVert $.
Also, without loss of generality, we may assume $\langle\mb a_{0},\mb x_{\star}\rangle\ge0$.
The accuracy, however, is measured with respect to a closest consistent
solution 
\begin{align}
\widehat{\mb x}_{\star} & \in\argmin_{\mb x}\left\{ \left\lVert \widehat{\mb x}-\mb x\right\rVert \,:\,\mb x^{(1)}=t\mb x_{\star}^{(1)},\mb x^{(2)}=t^{-1}\mb x_{\star}^{(2)},t\in\mbb R\backslash\{0\}\right\} \,.\label{eq:closest-equivalent}
\end{align}

To state the accuracy guarantees for \eqref{eq:AnchoredERM} in the
described bilinear regression problem, we first bound the important
quantities given by \eqref{eq:small-lambda}, \eqref{eq:big-lambda},
\eqref{eq:tail-weight}, and \eqref{eq:sensitivity} for the restriction
set 
\begin{align}
\mc S & =\left\{ \mb z\in\mbb S^{d-1}\,:\,\left|\langle\mb z,\mb x_{\star}{}^{-}\rangle\right|\le\frac{1}{2}\left\lVert \mb x_{\star}\right\rVert \right\} \,.\label{eq:S-BL}
\end{align}
This choice of $\mc S$ allows us to find a nontrivial bound for
$\lambda_{\mc D}$ and it is important in the proof of Theorem \ref{thm:bilinear}.

\subsection{Quantifying \texorpdfstring{$\lambda_{\protect\mc D}$, $\varLambda_{\protect\mc D}$,
$\varGamma_{\protect\mc D}$, and $\eta_{\protect\mc D}$}{lambda_D, Lambda_D, Gamma_D, and eta_D} }

Let $\mb h\in\mc S$ be a vector partitioned into $\mb h^{(1)}\in\mbb R^{d_{1}}$
and $\mb h^{(2)}\in\mbb R^{d_{2}}$ . We can write 
\begin{align*}
\E_{\mc D}\left|\langle\nabla f^{+}(\mb x_{\star}))-\nabla f^{-}(\mb x_{\star})),\mb h\rangle\right| & =\E_{\mc D}\left|\langle\mb a^{(1)}\otimes\mb a^{(2)},\mb h^{(1)}\otimes\mb x_{\star}^{(2)}+\mb x_{\star}^{(1)}\otimes\mb h^{(2)}\rangle\right|\\
 & =\sqrt{\frac{2}{\pi}}\,\E_{\mc D}\left\lVert \left(\mb h^{(1)}\otimes\mb x_{\star}^{(1)}+\mb x_{\star}^{(1)}\otimes\mb h^{(2)}\right)\mb a^{(2)}\right\rVert \,.
\end{align*}
 Using the Cauchy-Schwarz inequality and Lemma \ref{lem:gaussian-norm}
in the appendix, respectively, we obtain 
\begin{align*}
\E_{\mc D}\left|\langle\mb a^{(1)}\otimes\mb a^{(2)},\mb h^{(1)}\otimes\mb x_{\star}^{(2)}+\mb x_{\star}^{(1)}\otimes\mb h^{(2)}\rangle\right| & \le\sqrt{\frac{2}{\pi}}\left\lVert \mb h^{(1)}\otimes\mb x_{\star}^{(2)}+\mb x_{\star}^{(1)}\otimes\mb h^{(2)}\right\rVert _{\F}\,,
\end{align*}
 and 
\begin{align*}
\E_{\mc D}\left|\langle\mb a^{(1)}\otimes\mb a^{(2)},\mb h^{(1)}\otimes\mb x_{\star}^{(2)}+\mb x_{\star}^{(1)}\otimes\mb h^{(2)}\rangle\right| & \ge\frac{2}{\pi}\left\lVert \mb h^{(1)}\otimes\mb x_{\star}^{(2)}+\mb x_{\star}^{(1)}\otimes\mb h^{(2)}\right\rVert _{\F}\,.
\end{align*}
 Observe that
\begin{align*}
\left\lVert \mb h^{(1)}\otimes\mb x_{\star}^{(2)}+\mb x_{\star}^{(1)}\otimes\mb h^{(2)}\right\rVert _{\F}^{2} & =\left\lVert \mb h^{(1)}\otimes\mb x_{\star}^{(2)}\right\rVert _{\F}^{2}+\left\lVert \mb x_{\star}^{(1)}\otimes\mb h^{(2)}\right\rVert _{\F}^{2}+2\langle\mb h^{(1)}\otimes\mb x_{\star}^{(2)},\,\mb x_{\star}^{(1)}\otimes\mb h^{(2)}\rangle\\
 & =\left\lVert \mb h^{(1)}\right\rVert ^{2}\left\lVert \mb x_{\star}^{(2)}\right\rVert ^{2}+\left\lVert \mb h^{(2)}\right\rVert ^{2}\left\lVert \mb x_{\star}^{(1)}\right\rVert ^{2}+2\langle\mb h^{(1)},\mb x_{\star}^{(1)}\rangle\langle\mb h^{(2)},\mb x_{\star}^{(2)}\rangle\\
 & =\langle\mb X_{\star},\mb h\otimes\mb h\rangle\,,
\end{align*}
where 
\begin{align*}
\mb X_{\star} & =\left[\begin{array}{cc}
\left\lVert \mb x_{\star}^{(2)}\right\rVert ^{2}\mb I_{d_{1}} & \mb x_{\star}^{(1)}\otimes\mb x_{\star}^{(2)}\\
\mb x_{\star}^{(2)}\otimes\mb x_{\star}^{(1)} & \left\lVert \mb x_{\star}^{(1)}\right\rVert ^{2}\mb I_{d_{2}}
\end{array}\right]\\
 & =\frac{1}{2}\left\lVert \mb x_{\star}\right\rVert ^{2}\mb I+\frac{1}{2}\mb x_{\star}\otimes\mb x_{\star}-\frac{1}{2}\mb x_{\star}^{-}\otimes\mb x_{\star}^{-}\,.
\end{align*}
 Thus, we have 
\begin{equation}
\frac{1}{\pi}\sqrt{\langle\mb X_{\star},\mb h\otimes\mb h\rangle}\le\frac{1}{2}\E_{\mc D}\left|\langle\nabla f^{+}(\mb x_{\star})-\nabla f^{-}(\mb x_{\star}),\mb h\rangle\right|\le\frac{1}{\sqrt{2\pi}}\sqrt{\langle\mb X_{\star},\mb h\otimes\mb h\rangle}\,.\label{eq:bilinear-Eqh}
\end{equation}
Since $\mb h\in\mc S$, by definition $\left|\langle\mb h,\mb x_{\star}^{-}\rangle\right|\le\frac{1}{2}\left\lVert \mb x_{\star}\right\rVert $,
and it is easy to verify that 
\[
\frac{3}{8}\left\lVert \mb x_{\star}\right\rVert ^{2}\le\langle\mb X_{\star},\mb h\otimes\mb h\rangle\le\left\lVert \mb x_{\star}\right\rVert ^{2}\,.
\]
 Therefore, \eqref{eq:bilinear-Eqh} implies that 
\[
\frac{\sqrt{6}}{4\pi}\left\lVert \mb x_{\star}\right\rVert \le\frac{1}{2}\E_{\mc D}\left|\langle\nabla f^{+}(\mb x_{\star})-\nabla f^{-}(\mb x_{\star}),\mb h\rangle\right|\le\frac{1}{\sqrt{2\pi}}\left\lVert \mb x_{\star}\right\rVert \,,
\]
which also means 
\begin{equation}
\frac{\sqrt{6}}{4\pi}\left\lVert \mb x_{\star}\right\rVert \le\lambda_{\mc D}\le\varLambda_{\mc D}\le\frac{1}{\sqrt{2\pi}}\left\lVert \mb x_{\star}\right\rVert \,.\label{eq:lambda-Lambda-BL}
\end{equation}
 Note that without the restriction of $\mb h$ to the prescribed set
$\mc S$ in \eqref{eq:S-BL}, we could have had $\lambda_{\mc D}=0$,
which leads to vacuous bounds.

We can also evaluate $\varGamma_{\mc D}$ as 
\begin{align}
\varGamma_{\mc D} & =\frac{1}{2}\sqrt{\E_{\mc D}\left(\left\lVert \nabla f^{+}(\mb x_{\star})-\nabla f^{-}(\mb x_{\star})\right\rVert ^{2}\right)}\nonumber \\
 & =\frac{1}{2}\sqrt{\E_{\mc D}\left(\left\lVert \mb a^{(1)}\langle\mb a^{(2)},\mb x_{\star}^{(2)}\rangle\right\rVert ^{2}+\left\lVert \mb a^{(2)}\langle\mb a^{(1)},\mb x_{\star}^{(1)}\rangle\right\rVert ^{2}\right)}\nonumber \\
 & =\frac{1}{2}\sqrt{\E_{\mc D}\left(d_{1}\left|\langle\mb a^{(2)},\mb x_{\star}^{(2)}\rangle\right|^{2}+d_{2}\left|\langle\mb a^{(1)},\mb x_{\star}^{(1)}\rangle\right|^{2}\right)}\nonumber \\
 & =\frac{1}{2}\sqrt{d_{1}\left\lVert \mb x_{\star}^{(2)}\right\rVert ^{2}+d_{2}\left\lVert \mb x_{\star}^{(1)}\right\rVert ^{2}}\nonumber \\
 & =\frac{1}{4}\sqrt{d}\left\lVert \mb x_{\star}\right\rVert \,.\label{eq:Gamma-BL}
\end{align}
Furthermore, using the lower bound in \eqref{eq:bilinear-Eqh}, we
can write
\begin{align*}
\E_{\mc D}\left(\left|\langle\nabla f^{+}(\mb x_{\star})-\nabla f^{-}(\mb x_{\star}),\mb h\rangle\right|^{2}\right) & =\E_{\mc D}\left(\left|\langle\mb h^{(1)},\mb a^{(1)}\rangle\langle\mb a^{(2)},\mb x_{\star}^{(2)}\rangle+\langle\mb x_{\star}^{(1)},\mb a^{(1)}\rangle\langle\mb a^{(2)},\mb h^{(2)}\rangle\right|^{2}\right)\\
 & =\left\lVert \mb h^{(1)}\right\rVert ^{2}\left\lVert \mb x_{\star}^{(2)}\right\rVert ^{2}+2\langle\mb h^{(1)},\mb x_{\star}^{(1)}\rangle\langle\mb h^{(2)},\mb x_{\star}^{(2)}\rangle+\left\lVert \mb h^{(2)}\right\rVert ^{2}\left\lVert \mb x_{\star}^{(1)}\right\rVert ^{2}\\
 & =\left\lVert \mb h^{(1)}\otimes\mb x_{\star}^{(2)}+\mb x_{\star}^{(1)}\otimes\mb h^{(2)}\right\rVert _{\F}^{2}\\
 & =\langle\mb X_{\star},\mb h\otimes\mb h\rangle\\
 & \le\frac{\pi^{2}}{4}\left(\E_{\mc D}\left(\left|\langle\nabla f^{+}(\mb x_{\star})-\nabla f^{-}(\mb x_{\star}),\mb h\rangle\right|\right)\right)^{2}\,.
\end{align*}
Thus, we are guaranteed to have 
\begin{align}
\eta_{\mc D} & \le\frac{\pi}{2}\,.\label{eq:eta-BL}
\end{align}

\subsection{Accuracy guarantee}

To prove accuracy of \eqref{eq:AnchoredERM} in the considered bilinear
regression problem, we need to apply Lemma \ref{lem:accuracy} with
$\widehat{\mb x}_{\star}$ given by \eqref{eq:closest-equivalent}
as the reference ground truth. Therefore, we also use Proposition
\ref{pro:master} with a nontrivial restriction set $\mc S$ in our
analysis to establish an inequality of the form \eqref{eq:empirical-process-LB}.
Because $\widehat{\mb x}_{\star}$ depends on the observations, however,
it cannot be used as the reference ground truth in Proposition \ref{pro:master}.
Lemma \ref{lem:conversion-lemma} in the appendix shows that the bound
obtained using Proposition \ref{pro:master}, with the balanced ground
truth (i.e., $\mb x_{\star}$) as the reference point and the restrictions
set \eqref{eq:S-BL}, can be extended to the cases where other equivalent
solutions are considered as the reference ground truth.

For any $t\in\mbb R\backslash0$, let $D_{t}:\mbb R^{d}\to\mbb R^{d}$
be the reciprocal scaling operator described by 
\begin{align*}
D_{t}(\mb x) & =\left[\begin{array}{c}
t\mb x^{(1)}\\
t^{-1}\mb x^{(2)}
\end{array}\right]\,,
\end{align*}
where $\mb x$ is the concatenation of $\mb x^{(1)}\in\mbb R^{d_{1}}$
and $\mb x^{(2)}\in\mbb R^{d_{2}}$. Furthermore, for $\theta\in\left[0,1\right]$,
we define the cone $\mc K_{t,\theta}$ as 
\begin{align}
\mc K_{t,\theta} & \defeq\left\{ \mb h\,:\,\left|\langle D_{t}(\mb x_{\star}^{-}),\mb h\rangle\right|\le\theta\left\lVert D_{t}(\mb x_{\star}^{-})\right\rVert \left\lVert \mb h\right\rVert \right\} \,.\label{eq:Kt,theta}
\end{align}
This specific choice of the cone $\mc K_{t,\theta}$ is important
for the following reason: If, for some $t_{\text{opt}}\in\mbb R\backslash\{0\}$,
$\widehat{\mb x}_{\star}=D_{t_{\text{opt}}}(\mb x_{\star})$ is the
solution described by \eqref{eq:closest-equivalent}, then elementary
calculus shows that
\begin{align*}
\langle D_{t_{\text{opt}}}(\mb x_{\star}^{-}),\widehat{\mb x}-\widehat{\mb x}_{\star}\rangle & =0\,,
\end{align*}
which means that $\widehat{\mb x}-\widehat{\mb x}_{\star}\in\mc K_{t_{\text{opt}},0}$.
Leveraging this property we can show that $D_{t_{\text{opt}}^{-1}}(\widehat{\mb x}-\widehat{\mb x}_{\star})\in\mc K_{1,\frac{1}{2}}$
which allows us to invoke Lemma \ref{lem:conversion-lemma}.

The following theorem establishes the sample complexity of \eqref{eq:AnchoredERM}
for exact recovery in the noiseless bilinear regression problem. 
\begin{thm}[bilinear regression]
\label{thm:bilinear} We observe $n$ noiseless bilinear measurements
\eqref{eq:bilinear-eqs} corresponding to the functions $f_{i}^{\pm}$
described by \eqref{eq:quads}. Suppose that \eqref{eq:ApproxOracle}
holds for some $\varepsilon\in[7/8,1)$. If the number of measurements
obeys
\begin{align}
n & \gtrsim\varepsilon^{-4}\max\left\{ d,\log\frac{8}{\delta}\right\} \,,\label{eq:sample-complexity-BL}
\end{align}
with a sufficiently large hidden constant, then with probability $\ge1-\delta$,
the solution to \eqref{eq:AnchoredERM} coincides with $\widehat{\mb x}_{\star}$
given by \eqref{eq:closest-equivalent}.
\end{thm}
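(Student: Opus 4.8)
The plan is to obtain Theorem~\ref{thm:bilinear} as a specialization of the generic machinery, namely Proposition~\ref{pro:master} and Lemma~\ref{lem:accuracy}, after substituting the problem-specific values of $\lambda_{\mc D}$, $\varLambda_{\mc D}$, $\varGamma_{\mc D}$, and $\eta_{\mc D}$ recorded in \eqref{eq:lambda-Lambda-BL}, \eqref{eq:Gamma-BL}, and \eqref{eq:eta-BL}. First I would note that \eqref{eq:lambda-Lambda-BL} forces the condition number $\varLambda_{\mc D}/\lambda_{\mc D}$ to be an absolute constant, that \eqref{eq:eta-BL} gives $\eta_{\mc D}\le\pi/2$, and that combining \eqref{eq:Gamma-BL} with \eqref{eq:lambda-Lambda-BL} yields $\varGamma_{\mc D}^2/(\lambda_{\mc D}\varLambda_{\mc D})\lesssim d$. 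Substituting these into the generic sample complexity of Proposition~\ref{pro:master} collapses every factor except $d$, $\varepsilon^{-4}$, and $\log(2/\delta)$ into absolute constants, so the requirement reduces to the bound \eqref{eq:sample-complexity-BL} up to the hidden constant.

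With the sample complexity secured, I would apply Proposition~\ref{pro:master} with the \emph{balanced} reference point $\mb x_\star$ and the restriction set $\mc S$ from \eqref{eq:S-BL}. Here $q_i(\mb h)=\frac12|\langle\nabla f_i^+(\mb x_\star)-\nabla f_i^-(\mb x_\star),\mb h\rangle|$ is anchored at $\mb x_\star$, and the proposition delivers, on an event of probability at least $1-\delta/4$ (reserving the remaining budget for the conversion step), the uniform lower bound $\frac1n\sum_{i=1}^n q_i(\mb h)\ge(1-\frac12\varepsilon)\lambda_{\mc D}$ for every $\mb h\in\mc S$.

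The crux, and the step I expect to be the main obstacle, is that accuracy must be measured against the data-dependent closest consistent solution $\widehat{\mb x}_\star=D_{t_{\mathrm{opt}}}(\mb x_\star)$ of \eqref{eq:closest-equivalent}, whereas Proposition~\ref{pro:master} only controls the empirical process anchored at the fixed point $\mb x_\star$. To bridge this gap I would exploit the reciprocal-scaling structure: the optimality defining \eqref{eq:closest-equivalent} gives $\langle D_{t_{\mathrm{opt}}}(\mb x_\star^{-}),\widehat{\mb x}-\widehat{\mb x}_\star\rangle=0$, i.e.\ $\widehat{\mb x}-\widehat{\mb x}_\star\in\mc K_{t_{\mathrm{opt}},0}$, and an elementary computation then places $D_{t_{\mathrm{opt}}^{-1}}(\widehat{\mb x}-\widehat{\mb x}_\star)$ inside the cone $\mc K_{1,1/2}$ of \eqref{eq:Kt,theta}, whose intersection with the sphere is precisely $\mc S$. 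Feeding the lower bound at $\mb x_\star$ over $\mc S$ into Lemma~\ref{lem:conversion-lemma} transfers it to an inequality of the form \eqref{eq:empirical-process-LB} anchored at $\widehat{\mb x}_\star$ and valid along the error direction, at the cost of slack absorbed into a constant $\varepsilon_0\in[0,1)$; this is where the hypothesis $\varepsilon\ge7/8$ is consumed, guaranteeing that the net factor $1-\frac{\varepsilon+\varepsilon_0}{2}$ stays positive and compatible with the cone aperture $\theta=1/2$.

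Finally, having certified \eqref{eq:empirical-process-LB} for $\mb x_0=\widehat{\mb x}_\star$ together with the structural restriction supplied by the cone membership above, I would invoke Lemma~\ref{lem:accuracy}. Because the measurements are noiseless, $\xi_i=0$ for all $i$, so the resulting bound $\norm{\widehat{\mb x}-\widehat{\mb x}_\star}\le\frac{\frac1n\sum_i|\xi_i|}{\frac{1-\varepsilon_0}{2}\lambda_{\mc D}}$ vanishes, yielding the exact recovery $\widehat{\mb x}=\widehat{\mb x}_\star$. A union bound over the event of Proposition~\ref{pro:master} and the additional concentration events required by Lemma~\ref{lem:conversion-lemma}, together with a rescaling of the confidence parameter by an absolute constant, accounts for the passage from $\log(2/\delta)$ to $\log(8/\delta)$ in \eqref{eq:sample-complexity-BL}. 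I expect the genuinely delicate work to reside entirely inside Lemma~\ref{lem:conversion-lemma}: translating a uniform bound anchored at the deterministic $\mb x_\star$ into one anchored at the random $\widehat{\mb x}_\star$ without reintroducing the degeneracy $\lambda_{\mc D}=0$ that the restriction to $\mc S$ was designed to preclude.
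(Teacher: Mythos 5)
Your skeleton matches the paper's proof at the top level (specialize $\lambda_{\mc D},\varLambda_{\mc D},\varGamma_{\mc D},\eta_{\mc D}$ via \eqref{eq:lambda-Lambda-BL}--\eqref{eq:eta-BL}; run Proposition \ref{pro:master} at the balanced $\mb x_{\star}$ over $\mc S=\mbb S^{d-1}\cap\mc K_{1,\frac{1}{2}}$; transfer via Lemma \ref{lem:conversion-lemma}; finish with Lemma \ref{lem:accuracy} at $\mb x_{0}=\widehat{\mb x}_{\star}$ and $\xi_{i}=0$), but there is a genuine gap: you never control the data-dependent scaling $t_{\text{opt}}$. Lemma \ref{lem:conversion-lemma} is a deterministic statement that only applies when $\sqrt{2/3}\le\left|t_{\text{opt}}\right|\le\sqrt{3/2}$, and the ``elementary computation'' you cite for placing $D_{t_{\text{opt}}^{-1}}(\widehat{\mb x}-\widehat{\mb x}_{\star})$ in $\mc K_{1,\frac{1}{2}}$ is precisely the computation inside that lemma's proof, which fails without this range on $t_{\text{opt}}$. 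Since $\widehat{\mb x}_{\star}$ in \eqref{eq:closest-equivalent} depends on $\widehat{\mb x}$ and hence on the data, the bound $\left|t_{\text{opt}}\right|\approx1$ must be proved, and this is the role of Lemma \ref{lem:xstar-xhatstar}, which you never invoke: its inequality \eqref{eq:xhatstar-xstar} uses the optimality of $\widehat{\mb x}$ in \eqref{eq:AnchoredERM} together with matrix concentration (\eqref{eq:matrix-concentration-a+}, \eqref{eq:matrix-concentration-a-}) to show $\norm{\widehat{\mb x}_{\star}-\mb x_{\star}}\lesssim\norm{\mb a_{0}-\frac{1}{2n}\sum_{i}\nabla f_{i}^{+}(\mb x_{\star})+\nabla f_{i}^{-}(\mb x_{\star})}\le\frac{8}{25}(1-\varepsilon')\norm{\mb x_{\star}}$, whence $\max\{\left|t_{\text{opt}}-1\right|,\left|t_{\text{opt}}^{-1}-1\right|\}\le\frac{1}{2}(1-\varepsilon')$. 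The same estimate supplies the factor $\min\{\left|t_{\text{opt}}\right|,\left|t_{\text{opt}}^{-1}\right|\}$ that you silently drop when converting $\norm{D_{t_{\text{opt}}^{-1}}(\mb h)}$ back to $\norm{\mb h}$ in the transferred lower bound.

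A second, related omission: Lemma \ref{lem:accuracy} with $\mb x_{0}=\widehat{\mb x}_{\star}$ requires the oracle condition \eqref{eq:ApproxOracle-x0} anchored at $\widehat{\mb x}_{\star}$, which does not follow automatically from \eqref{eq:ApproxOracle} anchored at $\mb x_{\star}$. The paper re-anchors via \eqref{eq:a0-nabla-xhatstar} of Lemma \ref{lem:xstar-xhatstar}, paying a factor of $6$ and thereby replacing $\varepsilon$ with $\varepsilon'=6\varepsilon-5$; the hypothesis $\varepsilon\ge7/8$ is consumed exactly here (it guarantees $\varepsilon'\ge1/4$, which is used in the $t_{\text{opt}}$ estimates), not, as you assert, in keeping $1-\frac{\varepsilon+\varepsilon_{0}}{2}$ positive --- that quantity is positive for all $\varepsilon,\varepsilon_{0}<1$ and imposes no constraint. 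Likewise, the ``additional concentration events'' you attribute to Lemma \ref{lem:conversion-lemma} in the union bound actually belong to Lemma \ref{lem:xstar-xhatstar}. Without an argument of this type --- bounding $\norm{\widehat{\mb x}_{\star}-\mb x_{\star}}$ from the program's optimality \emph{before} the main accuracy lemma is available --- the transfer step at the heart of your plan cannot be executed.
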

\begin{proof}
Because of \eqref{eq:sample-complexity-BL}, we may assume 
\begin{align}
C\max\left\{ \sqrt{\frac{d}{n}}+\sqrt{\frac{\log\frac{8}{\delta}}{n}},\left(\sqrt{\frac{d}{n}}+\sqrt{\frac{\log\frac{8}{\delta}}{n}}\right)^{2}\right\}  & \le\frac{1}{9}\,,\label{eq:large-n}
\end{align}
where $C>0$ is the constant in Lemma \ref{lem:xstar-xhatstar}. With
$\varepsilon'=6\varepsilon-5$, it follows from \eqref{eq:a0-nabla-xhatstar}
in Lemma \ref{lem:xstar-xhatstar} that 
\begin{align}
\left\lVert \mb a_{0}-\frac{1}{2n}\sum_{i=1}^{n}\nabla f_{i}^{+}(\widehat{\mb x}_{\star})+\nabla f_{i}^{-}(\widehat{\mb x}_{\star})\right\rVert  & \le6\left\lVert \mb a_{0}-\frac{1}{2n}\sum_{i=1}^{n}\nabla f_{i}^{+}(\mb x_{\star})+\nabla f_{i}^{-}(\mb x_{\star})\right\rVert \nonumber \\
 & \le3(1-\varepsilon)\lambda_{\mc D}=\frac{1-\varepsilon'}{2}\lambda_{\mc D}\,,\label{eq:ApproxOracle-BL}
\end{align}
 holds with probability $\ge1-\delta/2$. 

Furthermore, the approximations \eqref{eq:lambda-Lambda-BL}, \eqref{eq:Gamma-BL},
and \eqref{eq:eta-BL} show that because of \eqref{eq:sample-complexity-BL},
Proposition \ref{pro:master}, with $\mb x_{\star}$ taken as the
reference ground truth, ensures 
\begin{align*}
\frac{1}{n}\sum_{i=1}^{n}\left|\langle\nabla f_{i}^{+}(\mb x_{\star})-\nabla f_{i}^{-}(\mb x_{\star}),\mb h\rangle\right| & \ge(1-\frac{1}{2}\varepsilon)\E_{\mc D}\left(\left|\langle\nabla f^{+}(\mb x_{\star})-\nabla f^{-}(\mb x_{\star}),\mb h\rangle\right|\right)\,,
\end{align*}
to hold for all $\mb h\in\mc S=\mbb S^{d-1}\cap\mc K_{1,\frac{1}{2}}$
with probability $\ge1-\delta/2$. On the same event, if $t_{\text{opt}}$,
defined as above through $\widehat{\mb x}_{\star}=D_{t_{\text{opt}}}(\mb x_{\star})$,
obeys $\sqrt{2/3}\le\left|t_{\text{opt}}\right|\le\sqrt{3/2}$, then
Lemma \ref{lem:conversion-lemma} implies that 
\begin{align*}
\frac{1}{n}\sum_{i=1}^{n}\left|\langle\nabla f_{i}^{+}(\widehat{\mb x}_{\star})-\nabla f_{i}^{-}(\widehat{\mb x}_{\star}),\mb h\rangle\right| & \ge(1-\frac{1}{2}\varepsilon)\E_{\mc D}\left(\left|\langle\nabla f^{+}(\widehat{\mb x}_{\star})-\nabla f^{-}(\widehat{\mb x}_{\star}),\mb h\rangle\right|\right)\\
 & =(1-\frac{1}{2}\varepsilon)\E_{\mc D}\left(\left|\langle\nabla f^{+}(\mb x_{\star})-\nabla f^{-}(\mb x_{\star}),D_{t_{\text{opt}}^{-1}}(\mb h)\rangle\right|\right)\,,
\end{align*}
for all $\mb h\in\mbb S^{d-1}\cap\mc K_{t_{\text{opt}},0}$. Note
that the expectations on the right-hand side are only with respect
to $f^{\pm}$; the vector $\widehat{\mb x}_{\star}$ and the scalar
$t_{\text{opt}}$ should be treated as deterministic variables. Using
\eqref{eq:small-lambda} we obtain 
\begin{align*}
\frac{1}{2n}\sum_{i=1}^{n}\left|\langle\nabla f_{i}^{+}(\widehat{\mb x}_{\star})-\nabla f_{i}^{-}(\widehat{\mb x}_{\star}),\mb h\rangle\right| & \ge(1-\frac{1}{2}\varepsilon)\lambda_{\mc D}\left\lVert D_{t_{\text{opt}}^{-1}}(\mb h)\right\rVert \,.
\end{align*}
 Therefore, the bound $\left\lVert D_{t_{\text{opt}}^{-1}}(\mb h)\right\rVert ^{2}=t_{\text{opt}}^{-2}\left\lVert \mb h^{(1)}\right\rVert ^{2}+t_{\text{opt}}^{2}\left\lVert \mb h^{(2)}\right\rVert ^{2}\ge\min\left\{ t_{\text{opt}}^{-2},t_{\text{opt}}^{2}\right\} \left\lVert \mb h\right\rVert ^{2}$
and the choice of $\varepsilon'=6\varepsilon-5$ made above yield
\begin{align}
\frac{1}{2n}\sum_{i=1}^{n}\left|\langle\nabla f_{i}^{+}(\widehat{\mb x}_{\star})-\nabla f_{i}^{-}(\widehat{\mb x}_{\star}),\mb h\rangle\right| & \ge(1-\frac{5+\varepsilon'}{12})\min\left\{ \left|t_{\text{opt}}^{-1}\right|,\left|t_{\text{opt}}\right|\right\} \lambda_{\mc D}\left\lVert \mb h\right\rVert \,.\label{eq:xhatstar-LB}
\end{align}

It only remains to bound $\left|t_{\text{opt}}\right|$ appropriately,
not only to approximate $\min\left\{ \left|t_{\text{opt}}^{-1}\right|,\left|t_{\text{opt}}\right|\right\} $,
but also to satisfy the condition $\sqrt{2/3}\le\left|t_{\text{opt}}\right|\le\sqrt{3/2}$
used previously. First, we show that $\left\lVert \widehat{\mb x}_{\star}-\mb x_{\star}\right\rVert $
is small through Lemma \ref{lem:xstar-xhatstar}. Note that the previous
application of Lemma \ref{lem:xstar-xhatstar}, in which we had \eqref{eq:large-n},
also guarantees 
\begin{align*}
\left\lVert \widehat{\mb x}_{\star}-\mb x_{\star}\right\rVert  & \le\frac{48}{5}\left\lVert \mb a_{0}-\frac{1}{2n}\sum_{i=1}^{n}\nabla f_{i}^{+}(\mb x_{\star})+\nabla f_{i}^{-}(\mb x_{\star})\right\rVert \le\frac{4}{5}(1-\varepsilon')\lambda_{\mc D}\,.
\end{align*}
 Therefore, using the upper bound in \eqref{eq:lambda-Lambda-BL},
we get 
\begin{align*}
\left\lVert \widehat{\mb x}_{\star}-\mb x_{\star}\right\rVert  & \le\frac{8}{25}(1-\varepsilon')\left\lVert \mb x_{\star}\right\rVert 
\end{align*}
 Because $\widehat{\mb x}_{\star}=D_{t_{\text{opt}}}(\mb x_{\star})$,
and $\mb x_{\star}$ is balanced, we also have
\begin{align*}
\left\lVert \widehat{\mb x}_{\star}-\mb x_{\star}\right\rVert ^{2} & =\left|t_{\text{opt}}-1\right|^{2}\left\lVert \mb x_{\star}^{(1)}\right\rVert ^{2}+\left|t_{\text{opt}}^{-1}-1\right|^{2}\left\lVert \mb x_{\star}^{(2)}\right\rVert ^{2}\\
 & \ge\frac{1}{2}\max\left\{ \left|t_{\text{opt}}^{-1}-1\right|^{2},\left|t_{\text{opt}}-1\right|^{2}\right\} \left\lVert \mb x_{\star}\right\rVert ^{2}\,,
\end{align*}
which together with the previous inequality imply 
\begin{align*}
\max\left\{ \left|t_{\text{opt}}^{-1}-1\right|,\left|t_{\text{opt}}-1\right|\right\}  & \le\frac{1}{2}(1-\varepsilon')\,.
\end{align*}
Therefore, we obtain
\begin{align*}
\min\left\{ \left|t_{\text{opt}}^{-1}\right|,\left|t_{\text{opt}}\right|\right\}  & =\left(\max\left\{ \left|t_{\text{opt}}^{-1}\right|,\left|t_{\text{opt}}\right|\right\} \right)^{-1}\\
 & \ge\left(1+\max\left\{ \left|t_{\text{opt}}^{-1}-1\right|,\left|t_{\text{opt}}-1\right|\right\} \right)^{-1}\\
 & \ge\left(1+\frac{1}{2}(1-\varepsilon')\right)^{-1}\\
 & \ge\left(1-\frac{5+\varepsilon'}{12}\right)^{-1}\left(1-\varepsilon'\right)\,,
\end{align*}
where the fourth line holds since $1/4\!\le\!\varepsilon'=6\varepsilon-5\!\le\!1$.
Using the derived bound in \eqref{eq:xhatstar-LB} yields
\begin{align*}
\frac{1}{2n}\sum_{i=1}^{n}\left|\langle\nabla f_{i}^{+}(\widehat{\mb x}_{\star})-\nabla f_{i}^{-}(\widehat{\mb x}_{\star}),\mb h\rangle\right| & \ge(1-\varepsilon')\lambda_{\mc D}\left\lVert \mb h\right\rVert \,.
\end{align*}
Hence, in view of \eqref{eq:ApproxOracle-x0}, we may invoke Lemma
\ref{lem:accuracy} with $\mb x_{0}=\widehat{\mb x}_{\star}$, $\varepsilon'$
in place of $\varepsilon$, and $\varepsilon_{0}=\varepsilon'$, and
prove the exact recovery (i.e., $\widehat{\mb x}=\widehat{\mb x}_{\star}$),
which occurs with probability $\ge1-\delta$. 
\end{proof}

\subsection{\label{ssec:Approx-Oracle}Approximation oracle}

We provide a computationally tractable procedure that can serve as
the approximation oracle discussed in Section \ref{ssec:assumptions}
and requires no information other than the given measurements \eqref{eq:bilinear-eqs}.
This approach basically follows the idea of ``spectral initialization'' used for the nonconvex phase retrieval and blind deconvolution methods \cite{Netrapalli2013Phase,Candes2014Phase, Li2018Rapid}; refinements of this approach can be found in \cite{Chen2015Solving,Mondelli2018Fundamental,Luo2018Optimal} and references therein. We use the measurements to find an approximation $\mb a_{0}$ of $\mb x_{\star}/2$
and show, by Lemma \ref{lem:gradient-concentration}, that $\mb x_{\star}/2$
itself is an approximation for $\frac{1}{2n}\sum_{i=1}^{n}\nabla f_{i}^{+}(\mb x_{\star})+\nabla f_{i}^{-}(\mb x_{\star})$.

Let $\lambda_{\max}$ and $\mb v_{\max}$ be respectively the leading
eigenvalue and eigenvector of
\begin{align*}
\mb S_{n} & \defeq\frac{1}{2n}\sum_{i=1}^{n}y_{i}\left(\mb a_{i}^{+}\otimes\mb a_{i}^{+}-\mb a_{i}^{-}\otimes\mb a_{i}^{-}\right)\,.
\end{align*}
 The fact that $\mb S_{n}$ has an all-zero diagonal and is symmetric
ensures that $\lambda_{\max}\!\ge\!0$. We show that 
\begin{equation}
\mb a_{0}=\left(\frac{\lambda_{\max}}{2}\right)^{1/2}\mb v_{\max}\label{eq:a0-bilinear}
\end{equation}
 meets the required condition \eqref{eq:ApproxOracle} with high probability.
To this end, first we show that $\mb S_{n}$ is well-concentrated
around its expectation. Observe that $\langle\mb a_{i}^{-},\mb x_{\star}\rangle=\langle\mb a_{i}^{+},\mb x_{\star}^{-}\rangle$
and similarly $\langle\mb a_{i}^{-},\mb x_{\star}^{-}\rangle=\langle\mb a_{i}^{+},\mb x_{\star}\rangle$.
Thus, we obtain
\begin{align*}
\E_{\mc D}\mb S_{n} & =\E_{\mc D}\left(\frac{1}{8}\left(\left|\langle\mb a_{i}^{+},\mb x_{\star}\rangle\right|^{2}-\left|\langle\mb a_{i}^{-},\mb x_{\star}\rangle\right|^{2}\right)\left(\mb a_{i}^{+}\otimes\mb a_{i}^{+}-\mb a_{i}^{-}\otimes\mb a_{i}^{-}\right)\right)\\
 & =\frac{1}{8}\E_{\mc D}\left(\left|\langle\mb a_{i}^{+},\mb x_{\star}\rangle\right|^{2}\mb a_{i}^{+}\otimes\mb a_{i}^{+}\right)+\frac{1}{8}\E_{\mc D}\left(\left|\langle\mb a_{i}^{-},\mb x_{\star}\rangle\right|^{2}\mb a_{i}^{-}\otimes\mb a_{i}^{-}\right)\\
 & \phantom{=}-\frac{1}{8}\E_{\mc D}\left(\left|\langle\mb a_{i}^{+},\mb x_{\star}^{-}\rangle\right|^{2}\mb a_{i}^{+}\otimes\mb a_{i}^{+}\right)-\frac{1}{8}\E_{\mc D}\left(\left|\langle\mb a_{i}^{-},\mb x_{\star}^{-}\rangle\right|^{2}\mb a_{i}^{-}\otimes\mb a_{i}^{-}\right)\\
 & =\frac{1}{4}\left(2\mb x_{\star}\otimes\mb x_{\star}+\left\lVert \mb x_{\star}\right\rVert ^{2}\mb I\right)-\frac{1}{4}\left(2\mb x_{\star}^{-}\otimes\mb x_{\star}^{-}+\left\lVert \mb x_{\star}^{-}\right\rVert ^{2}\mb I\right)\\
 & =\frac{1}{2}\left(\mb x_{\star}\otimes\mb x_{\star}-\mb x_{\star}^{-}\otimes\mb x_{\star}^{-}\right)\,.
\end{align*}
 By the triangle inequality we can write
\begin{align*}
\left\lVert \mb S_{n}-\E_{\mc D}\mb S_{n}\right\rVert _{\mr{op}} & \le\frac{1}{8}\left\lVert \frac{1}{n}\sum_{i=1}^{n}\left|\langle\mb a_{i}^{+},\mb x_{\star}\rangle\right|^{2}\mb a_{i}^{+}\otimes\mb a_{i}^{+}-2\mb x_{\star}\otimes\mb x_{\star}-\left\lVert \mb x_{\star}\right\rVert ^{2}\mb I\right\rVert _{\mr{op}}\\
 & \phantom{\le}+\frac{1}{8}\left\lVert \frac{1}{n}\sum_{i=1}^{n}\left|\langle\mb a_{i}^{-},\mb x_{\star}\rangle\right|^{2}\mb a_{i}^{-}\otimes\mb a_{i}^{-}-2\mb x_{\star}\otimes\mb x_{\star}-\left\lVert \mb x_{\star}\right\rVert ^{2}\mb I\right\rVert _{\mr{op}}\\
 & \phantom{\le}+\frac{1}{8}\left\lVert \frac{1}{n}\sum_{i=1}^{n}\left|\langle\mb a_{i}^{+},\mb x_{\star}^{-}\rangle\right|^{2}\mb a_{i}^{+}\otimes\mb a_{i}^{+}-2\mb x_{\star}^{-}\otimes\mb x_{\star}^{-}-\left\lVert \mb x_{\star}^{-}\right\rVert ^{2}\mb I\right\rVert _{\mr{op}}\\
 & \phantom{\le}+\frac{1}{8}\left\lVert \frac{1}{n}\sum_{i=1}^{n}\left|\langle\mb a_{i}^{-},\mb x_{\star}^{-}\rangle\right|^{2}\mb a_{i}^{-}\otimes\mb a_{i}^{-}-2\mb x_{\star}^{-}\otimes\mb x_{\star}^{-}-\left\lVert \mb x_{\star}^{-}\right\rVert ^{2}\mb I\right\rVert _{\mr{op}}\,.
\end{align*}
 Each of the summands on the right-hand side is small for a sufficiently
large $n$. For example, as shown in \citep[Lemma 7.4]{Candes2014Phase},
if $n\ge C_{\tau}d\log d$ for a sufficiently large constant $C_{\tau}$
that depends only on $\tau\in\left(0,1\right)$, then 
\begin{align*}
\left\lVert \frac{1}{n}\sum_{i=1}^{n}\left|\langle\mb a_{i}^{+},\mb x_{\star}\rangle\right|^{2}\mb a_{i}^{+}\otimes\mb a_{i}^{+}-2\mb x_{\star}\otimes\mb x_{\star}-\left\lVert \mb x_{\star}\right\rVert ^{2}\mb I\right\rVert _{\mr{op}} & \le\tau\left\lVert \mb x_{\star}\right\rVert ^{2}\,,
\end{align*}
with probability $\ge1-5\exp\left(-4\tau d\right)-4d^{-2}$. Clearly,
we can write similar inequalities for the other three summands and
by a simple union bound conclude that 
\begin{align}
\left\lVert \mb S_{n}-\E_{\mc D}\mb S_{n}\right\rVert _{\mr{op}} & \le\frac{\tau}{4}\left(\left\lVert \mb x_{\star}\right\rVert ^{2}+\left\lVert \mb x_{\star}^{-}\right\rVert ^{2}\right)=\frac{\tau}{2}\left\lVert \mb x_{\star}\right\rVert ^{2}\,,\label{eq:Sn-concentrates}
\end{align}
 holds with probability $\ge1-c_{\tau}d^{-2}$ for some absolute constant
$c_{\tau}$ depending only on $\tau$. Recall that $\E_{\mc D}\mb S_{n}=\left(\mb x_{\star}\otimes\mb x_{\star}-\mb x_{\star}^{-}\otimes\mb x_{\star}^{-}\right)/2$.
Because we chose $\left\lVert \mb x_{\star}^{(1)}\right\rVert =\left\lVert \mb x{}_{\star}^{(2)}\right\rVert $
and by the construction of $\mb x_{\star}$ and $\mb x_{\star}^{-}$
we have $\langle\mb x_{\star},\mb x_{\star}^{-}\rangle=0$. Thus $\mb x_{\star}$
and $\mb x_{\star}^{-}$ are eigenvectors of $\E_{\mc D}\mb S_{n}$.
We may assume that $\langle\mb v_{\max},\mb x_{\star}\rangle\ge0$;
otherwise we can simply use $-\mb x_{\star}$ as the target. Then,
on the event \eqref{eq:Sn-concentrates}, a variant of the Davis-Kahan
theorem \citep[Corollary 3]{Yu2015Useful} ensures
\begin{align*}
\left\lVert \mb v_{\max}-\frac{\mb x_{\star}}{\left\lVert \mb x_{\star}\right\rVert }\right\rVert  & \le\frac{2^{1/2}\tau\left\lVert \mb x_{\star}\right\rVert ^{2}}{\left\lVert \mb x_{\star}\right\rVert ^{2}/2}=2^{3/2}\tau\,.
\end{align*}
 Since $\mb a_{0}$ is defined by \eqref{eq:a0-bilinear}, we equivalently
obtain 
\begin{align*}
\left\lVert \mb a_{0}-\left(\frac{\lambda_{\max}}{2}\right)^{1/2}\frac{\mb x_{\star}}{\left\lVert \mb x_{\star}\right\rVert }\right\rVert  & \le2\tau\lambda_{\max}^{1/2}\,.
\end{align*}
 Using \eqref{eq:Sn-concentrates}, it is also easy to show that 
\[
\frac{1-\tau}{2}\left\lVert \mb x_{\star}\right\rVert ^{2}\le\lambda_{\max}\le\frac{1+\tau}{2}\left\lVert \mb x_{\star}\right\rVert ^{2}\,.
\]
 Therefore, we deduce that 
\begin{align}
\left\lVert \mb a_{0}-\frac{1}{2}\mb x_{\star}\right\rVert  & \le\left\lVert \mb a_{0}-\left(\frac{\lambda_{\max}}{2}\right)^{1/2}\frac{\mb x_{\star}}{\left\lVert \mb x_{\star}\right\rVert }\right\rVert +\left\lVert \left(\frac{\lambda_{\max}}{2}\right)^{1/2}\frac{\mb x_{\star}}{\left\lVert \mb x_{\star}\right\rVert }-\frac{1}{2}\mb x_{\star}\right\rVert \nonumber \\
 & \le2^{1/2}\tau\left(1+\frac{\tau}{2}\right)\left\lVert \mb x_{\star}\right\rVert +\frac{1}{2}\tau\left\lVert \mb x_{\star}\right\rVert \,.\label{eq:a0-xstar}
\end{align}
It follows from Lemma \ref{lem:gradient-concentration} for $\mb x=\mb x_{\star}$,
\eqref{eq:a0-xstar}, and \eqref{eq:lambda-Lambda-BL}, that if $n\overset{\tau}{\gtrsim}\left(d+\log\frac{4}{\delta}\right)\log d$,
then
\begin{align*}
\left\lVert \frac{1}{2n}\sum_{i=1}^{n}\nabla f_{i}^{+}(\mb x_{\star})+\nabla f_{i}^{-}(\mb x_{\star})-\mb a_{0}\right\rVert  & \le C_{\tau}\lambda_{\mc D}\,,
\end{align*}
 with probability $\ge1-c_{\tau}d^{-2}$. Choosing an appropriate
value for $\tau$ in terms of $\varepsilon$, the constant $C_{\tau}$
can also be made smaller than $(1-\varepsilon)/2$, thereby guaranteeing
\eqref{eq:ApproxOracle}.

\subsection{Numerical Experiments}

To evaluate the proposed method numerically, we ran $100$ trials
with the standard Gaussian measurements for each pair of $d_{1}=d_{2}=d/2\in\left\{ 50,100,150\right\} $
and $n/d\in\left\{ 5,6,7,8,9\right\} $. The signal pairs $\mb x_{\star}^{(1)}$
and $\mb x_{\star}^{(2)}$ are drawn independently and uniformly from the $d/2$-dimensional
unit sphere in each trial. We solved an equivalent form of \eqref{eq:AnchoredERM}
which is the quadratically-constrained linear maximization 
\begin{equation}
\begin{aligned}\max_{\mb x\in \mbb{R}^{d_1+d_2},\mb w\in\mbb{R}^n}\  & \langle\mb a_{0},\mb x\rangle-\frac{1}{n}\langle\mb 1_{n},\mb w\rangle\\
\text{subject to}\  & \frac{1}{4}\left|\langle\mb a_{i}^{+},\mb x\rangle\right|^{2}-y_{i}\le w_{i}, & i=1,\dotsc,n\\
 & \frac{1}{4}\left|\langle\mb a_{i}^{-},\mb x\rangle\right|^{2}\le w_{i}, & i=1,\dotsc,n\,,
\end{aligned}
\label{eq:QCLM}
\end{equation}
 where $\mb 1_{n}$ denotes the $n$-dimensional all-one vector, using
the Gurobi solver \citep{Gurobi} through the CVX package \citep{CVX}.
This solver relies on an \emph{interior point method} for solving
the second order cone program (SOCP) corresponding to \eqref{eq:QCLM}.
For better scalability, first order methods including stochastic and
incremental methods can be used to solve \eqref{eq:AnchoredERM} directly.
We did not intend in this paper to find the best convex optimization
method for solving \eqref{eq:AnchoredERM}. 

Figure \ref{fig:RelErr} shows the median of the relative
error computed as 
\[
\text{Relative Error}=\sqrt{\frac{\Vert\sqrt{\Vert\widehat{\mb x}^{(2)}\Vert/\Vert\widehat{\mb x}^{(1)}\Vert}\,\widehat{\mb x}^{(1)}-\mb x_{\star}^{(1)}\Vert^{2}+\Vert\sqrt{\Vert\widehat{\mb x}^{(1)}\Vert/\Vert\widehat{\mb x}^{(2)}\Vert}\,\widehat{\mb x}^{(2)}-\mb x_{\star}^{(2)}\Vert^{2}}{\Vert\mb x_{\star}^{(1)}\Vert^{2}+\Vert\mb x{}_{\star}^{(2)}\Vert^{2}}}\,.
\]
 The experiment suggests that the proposed method succeeds when the
oversampling ratio is around eight (i.e., $n\approx8(d_{1}+d_{2})=8d$).

\begin{figure}
\noindent
\centering

\includegraphics[width=1\textwidth]{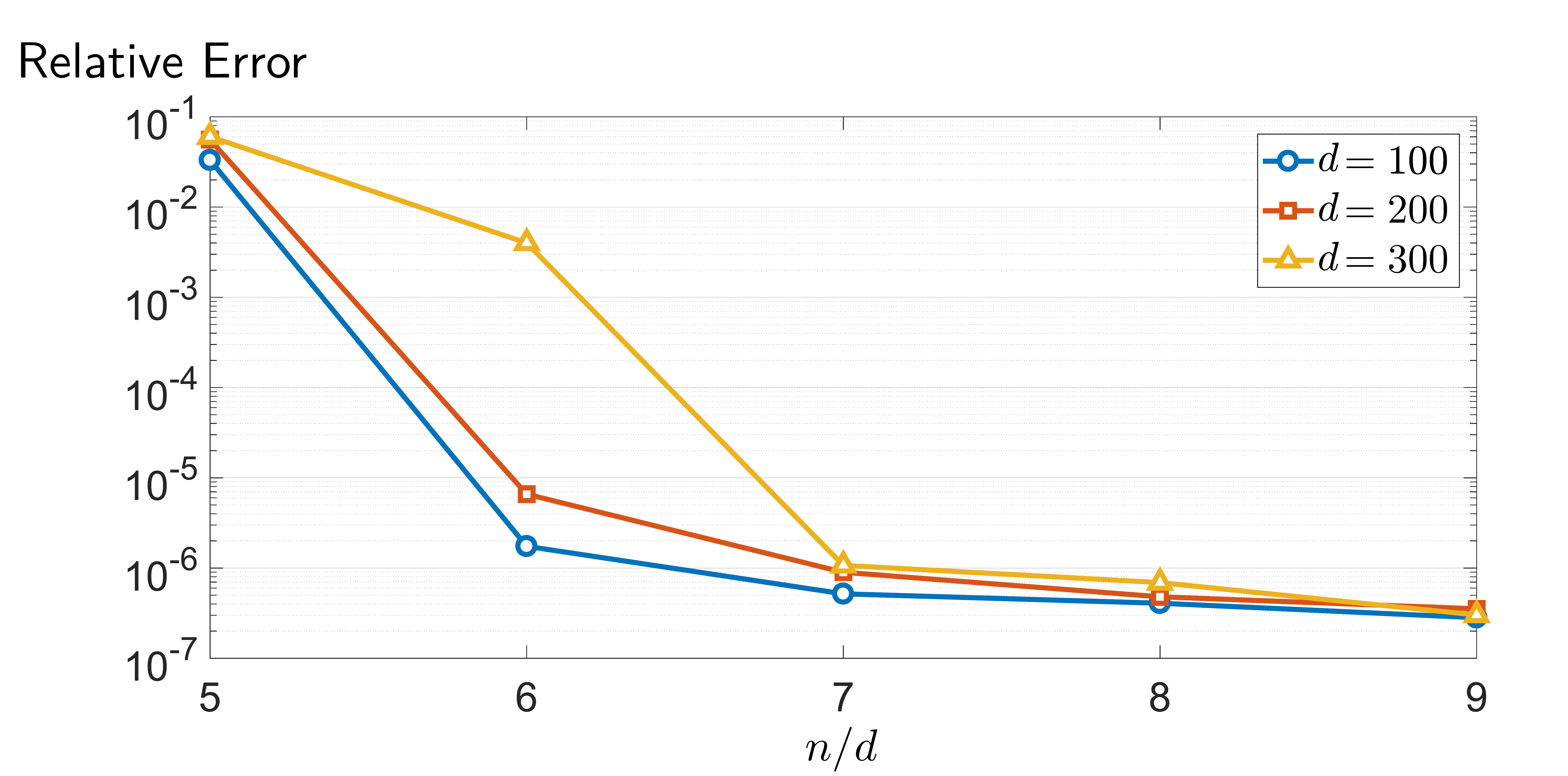}\caption{\label{fig:RelErr}Relative error of the estimate \eqref{eq:AnchoredERM}
versus the oversampling ratio $n/d$}

\end{figure}

\appendix

\section{Technical Lemmas}

\subsection{Lemmas used in Section \ref{sec:main-result}}
\begin{lem}
\label{lem:trucated2empirical} For any $\alpha>0$, with probability
$\ge1-\delta/2$ we have
\begin{align*}
\frac{1}{n}\sum_{i=1}^{n}\frac{1}{\alpha}\E_{\mb z\sim\gamma_{\mb h}}\left(\left[\alpha q_{i}(\mb z)\right]_{\le1}\right) & \le\frac{1}{n}\sum_{i=1}^{n}q_{i}(\mb h)+\sigma\varGamma_{\mc D}+\frac{1}{\alpha}\sqrt{\frac{\log\frac{2}{\delta}}{2n}}\,,
\end{align*}
for all $\mb h$. 
\end{lem}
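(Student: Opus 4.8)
The plan is to absorb the entire $\mb h$-dependence of the left-hand side into the empirical average $\frac{1}{n}\sum_i q_i(\mb h)$, so that the remaining stochastic term becomes a sum of i.i.d.\ \emph{bounded} random variables that does \emph{not} depend on $\mb h$. A single application of Hoeffding's inequality then yields the bound uniformly over all $\mb h$, with no union bound, covering, or chaining argument needed.

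First I would establish the pointwise inequality
\[
\frac{1}{\alpha}\left[\alpha q_{i}(\mb z)\right]_{\le1}\le q_{i}(\mb h)+\frac{1}{\alpha}\left[\alpha q_{i}(\mb z-\mb h)\right]_{\le1}\,.
\]
This follows by combining the triangle-type inequality \eqref{eq:triangle-like}, which gives $q_{i}(\mb z)\le q_{i}(\mb h)+q_{i}(\mb z-\mb h)$, with the elementary fact that $\min(a+b,1)\le a+\min(b,1)$ for $a,b\ge0$. Taking $\E_{\mb z\sim\gamma_{\mb h}}$ on both sides and using that the law of $\mb z-\mb h$ under $\gamma_{\mb h}$ is exactly $\gamma_{\mb 0}$, I obtain
\[
\frac{1}{\alpha}\E_{\mb z\sim\gamma_{\mb h}}\left[\alpha q_{i}(\mb z)\right]_{\le1}\le q_{i}(\mb h)+T_{i}\,,\qquad T_{i}\defeq\frac{1}{\alpha}\E_{\mb w\sim\gamma_{\mb 0}}\left[\alpha q_{i}(\mb w)\right]_{\le1}\,.
\]
The crucial point is that $T_{i}$ is independent of $\mb h$ and, because of the truncation, satisfies $0\le T_{i}\le1/\alpha$. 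Averaging over $i$ therefore reduces the entire claim to showing that $\frac{1}{n}\sum_{i=1}^{n}T_{i}\le\sigma\varGamma_{\mc D}+\frac{1}{\alpha}\sqrt{\log(2/\delta)/(2n)}$ holds with probability $\ge1-\delta/2$.

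Since the $T_{i}$ are i.i.d.\ and take values in $[0,1/\alpha]$, Hoeffding's inequality gives $\frac{1}{n}\sum_{i=1}^{n}T_{i}\le\E_{\mc D}T_{1}+\frac{1}{\alpha}\sqrt{\log(2/\delta)/(2n)}$ on an event of probability $\ge1-\delta/2$. It then remains only to bound the mean: dropping the truncation via $\left[\alpha q_{1}(\mb w)\right]_{\le1}\le\alpha q_{1}(\mb w)$ and using positive homogeneity $q_{1}(\mb w)=\sigma q_{1}(\mb g)$ for $\mb w=\sigma\mb g$ yields $\E_{\mc D}T_{1}\le\sigma\,\E_{\mb g}\E_{\mc D}q(\mb g)$, and two applications of Jensen's inequality (first in $\mc D$ conditionally on $\mb g$, then in $\mb g$) give $\E_{\mb g}\E_{\mc D}q(\mb g)\le\sqrt{\E_{\mb g}\E_{\mc D}q^{2}(\mb g)}=\varGamma_{\mc D}$ by the definition \eqref{eq:sensitivity}. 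Chaining the displays together completes the argument.

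I do not anticipate a genuine obstacle here; the sole delicate design choice is the \emph{truncated} triangle inequality in the first step. That manipulation is precisely what strips the $\mb h$-dependence out of the random part and keeps it bounded by $1/\alpha$, which is what allows the bound to hold simultaneously for all $\mb h$ from a single scalar concentration inequality rather than a uniform empirical-process argument.
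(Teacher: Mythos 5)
Your proposal is correct and takes essentially the same route as the paper's proof: both strip the $\mb h$-dependence using the truncated triangle inequality (i.e., $q_i(\mb z)\le q_i(\mb h)+q_i(\mb z-\mb h)$ together with monotonicity and subadditivity of $u\mapsto[u]_{\le1}$), apply Hoeffding's inequality once to the $\mb h$-independent bounded variables $\E_{\mb z\sim\gamma_{\mb 0}}\left(\left[\alpha q_i(\mb z)\right]_{\le1}\right)$, and bound the mean by $\alpha\sigma\varGamma_{\mc D}$ via homogeneity and Cauchy--Schwarz/Jensen. The only cosmetic difference is that the paper keeps the truncation one step longer (invoking concavity of $[u]_{\le1}$ before dropping it), whereas you discard it immediately via $[u]_{\le1}\le u$.
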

\begin{proof}
The triangle inequality and subadditivity of $u\mapsto\left[u\right]_{\le1}$
over the nonnegative real numbers yields
\begin{align*}
& \frac{1}{n}\sum_{i=1}^{n}\frac{1}{\alpha}\E_{\mb z\sim\gamma_{\mb h}}\left(\left[\alpha q_{i}(\mb z)\right]_{\le1}\right)\\
 & \le\frac{1}{n}\sum_{i=1}^{n}\frac{1}{\alpha}\E_{\mb z\sim\gamma_{\mb h}}\left(\left[\alpha q_{i}(\mb h)+\left|\alpha q_{i}(\mb z)-\alpha q_{i}(\mb h)\right|\right]_{\le1}\right)\\
 & \le\frac{1}{n}\sum_{i=1}^{n}\frac{1}{\alpha}\E_{\mb z\sim\gamma_{\mb h}}\left(\left[\alpha q_{i}\left(\mb h\right)\right]_{\le1}\right)+\frac{1}{n}\sum_{i=1}^{n}\frac{1}{\alpha}\E_{\mb z\sim\gamma_{\mb h}}\left(\left[\left|\alpha q_{i}(\mb z)-\alpha q_{i}(\mb h)\right|\right]_{\le1}\right)\,.
\end{align*}
Clearly, $\left[\alpha q_{i}(\mb h)\right]_{\le1}\le\alpha q_{i}(\mb h)$.
Thus, we only need to bound the second term in the above inequality.
Using \eqref{eq:triangle-like} followed by the Hoeffding's inequality shows that
\begin{align*}
  & \frac{1}{n}\sum_{i=1}^{n}\E_{\mb z\sim\gamma_{\mb h}}\left(\left[\alpha\left|q_{i}(\mb z)-q_{i}(\mb h)\right|\right]_{\le1}\right) \\
  & \le \frac{1}{n}\sum_{i=1}^{n}\E_{\mb z\sim\gamma_{\mb h}}\left(\left[\alpha\left|q_{i}(\mb z -\mb{h})\right|\right]_{\le1}\right) \\
  & = \frac{1}{n}\sum_{i=1}^{n}\E_{\mb z\sim\gamma_{\mb 0}}\left(\left[\alpha\left|q_{i}(\mb z)\right|\right]_{\le1}\right) \\
  & \le\E_{\mc D}\E_{\mb z\sim\gamma_{\mb 0}}\left(\left[\alpha\left|q_{i}(\mb z)\right|\right]_{\le1}\right)+\sqrt{\frac{\log\frac{2}{\delta}}{2n}}\,, 
\end{align*}
holds with probability $\ge 1-\delta/2$ for all $\mb{h}$. Therefore, on this event we have
\begin{align*}
& \frac{1}{n}\sum_{i=1}^{n}\frac{1}{\alpha}\E_{\mb z\sim\gamma_{\mb h}}\left(\left[\alpha q_{i}(\mb z)\right]_{\le1}\right)\\
 & \le\frac{1}{n}\sum_{i=1}^{n}q_{i}(\mb h)+\frac{1}{\alpha}\left[\E_{\mc D}\E_{\mb z\sim\gamma_{\mb 0}}\left(\alpha q_{i}(\mb z)\right)\right]_{\le1}+\frac{1}{\alpha}\sqrt{\frac{\log\frac{2}{\delta}}{2n}}\\
 & \le\frac{1}{n}\sum_{i=1}^{n}q_{i}(\mb h)+\sigma\varGamma_{\mc D}+\frac{1}{\alpha}\sqrt{\frac{\log\frac{2}{\delta}}{2n}}
\end{align*}
where concavity of $u\mapsto\left[u\right]_{\le1}$ is used in the first inequality, and the second
inequality follows from the fact that $\left[u\right]_{\le1}\le u$,
the Cauchy-Schwarz inequality, and the definition \eqref{eq:sensitivity}.
\end{proof}
\begin{lem}
\label{lem:variance-term}For all $\mb h$, we have
\begin{align*}
\sqrt{\E_{\mb z\sim\gamma_{\mb h}}\E_{\mc D}q_{1}^{2}(\mb z)} & \le\eta_{\mc D}\E_{\mc D}\left(q_{1}(\mb h)\right)+\sigma\varGamma_{\mc D}\,.
\end{align*}
\end{lem}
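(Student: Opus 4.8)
The plan is to exploit the positive homogeneity of $q_{1}$ together with the triangle-like inequality \eqref{eq:triangle-like}, and then to split the left-hand side by Minkowski's inequality. First I would write a sample $\mb z\sim\gamma_{\mb h}$ as $\mb z=\mb h+\mb w$, where $\mb w$ is distributed according to $\gamma_{\mb 0}$, i.e. $\mb w=\sigma\mb g$ with $\mb g$ a standard normal vector. Since $q_{1}$ satisfies \eqref{eq:triangle-like}, we have the pointwise bound $q_{1}(\mb z)\le q_{1}(\mb h)+q_{1}(\mb z-\mb h)=q_{1}(\mb h)+q_{1}(\mb w)$, which is the decomposition that drives the whole argument.

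The key step is to take the $L^{2}$-norm of both sides with respect to the product of $\gamma_{\mb h}$ (equivalently, the law of $\mb w$) and the data distribution $\mc D$, and to invoke Minkowski's inequality. This yields $\sqrt{\E_{\mb z\sim\gamma_{\mb h}}\E_{\mc D}\,q_{1}^{2}(\mb z)}\le\sqrt{\E_{\mc D}\,q_{1}^{2}(\mb h)}+\sqrt{\E_{\mb w}\E_{\mc D}\,q_{1}^{2}(\mb w)}$, where the first term loses its dependence on $\mb z$ because $q_{1}(\mb h)$ is constant in the smoothing variable.

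To finish, I would control the two terms separately. The first term is handled immediately by the regularity assumption \eqref{eq:tail-weight}, giving $\sqrt{\E_{\mc D}\,q_{1}^{2}(\mb h)}\le\eta_{\mc D}\E_{\mc D}(q_{1}(\mb h))$. For the second term, positive homogeneity gives $q_{1}(\mb w)=q_{1}(\sigma\mb g)=\sigma q_{1}(\mb g)$, so that $\sqrt{\E_{\mb w}\E_{\mc D}\,q_{1}^{2}(\mb w)}=\sigma\sqrt{\E_{\mb g}\E_{\mc D}\,q_{1}^{2}(\mb g)}=\sigma\varGamma_{\mc D}$ by the definition \eqref{eq:sensitivity}. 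Adding the two bounds produces exactly the claimed inequality.

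I do not anticipate a genuine obstacle; the argument is elementary once the splitting $\mb z=\mb h+\mb w$ is in place. The only point deserving a little care is bookkeeping of the two independent sources of randomness, the Gaussian smoothing $\mb z\sim\gamma_{\mb h}$ and the data $\mc D$, so that Minkowski's inequality is applied against the correct product measure and the rescaling $\mb w=\sigma\mb g$ is used consistently when reducing the second term to $\sigma\varGamma_{\mc D}$.
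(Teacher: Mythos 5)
Your proof is correct and takes essentially the same route as the paper's: both arguments combine the triangle-like bound \eqref{eq:triangle-like} with Minkowski's inequality over the product of $\gamma_{\mb h}$ and $\mc D$, then finish by applying \eqref{eq:tail-weight} to the $q_{1}(\mb h)$ term and reducing the Gaussian term to $\sigma\varGamma_{\mc D}$ via positive homogeneity and the definition \eqref{eq:sensitivity}. The only cosmetic difference is ordering---you apply \eqref{eq:triangle-like} pointwise before taking $L^{2}$ norms, while the paper first splits $q_{1}(\mb z)=q_{1}(\mb h)+\left(q_{1}(\mb z)-q_{1}(\mb h)\right)$ under Minkowski and then invokes \eqref{eq:triangle-like}.
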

\begin{proof}
It immediately follows from the triangle inequality, \eqref{eq:triangle-like},
and the equivalence of $\mb z\sim\gamma_{\mb h}$ and $\mb z-\mb h\sim\gamma_{\mb 0}$,
that
\begin{align*}
\sqrt{\E_{\mb z\sim\gamma_{\mb h}}\E_{\mc D}q_{1}^{2}(\mb z)} & \le\sqrt{\E_{\mc D}\left(q_{1}^{2}(\mb h)\right)}+\sqrt{\E_{\mb z\sim\gamma_{\mb h}}\E_{\mc D}\left(\left(q_{1}(\mb z)-q_{1}(\mb h)\right)^{2}\right)}\\
 & \le\sqrt{\E_{\mc D}\left(q_{1}^{2}(\mb h)\right)}+\sqrt{\E_{\mb z\sim\gamma_{\mb h}}\E_{\mc D}\left(q_{1}^{2}(\mb z-\mb h)\right)}\\
 & \le\sqrt{\E_{\mc D}\left(q_{1}^{2}(\mb h)\right)}+\sqrt{\E_{\mb z\sim\gamma_{0}}\E_{\mc D}\left(q_{1}^{2}(\mb z)\right)}\,,
\end{align*}
which by the assumption \eqref{eq:tail-weight} and definition \eqref{eq:sensitivity}
yields is the desired bound.
\end{proof}

\subsection{Lemmas used in Section \ref{sec:bilinear-regression}}

Note that in the following lemmas the functions $f_{i}^{+}$ and $f_{i}^{-}$
are defined as in \eqref{eq:quads}.
\begin{lem}
\label{lem:conversion-lemma}With $\mc K_{t,\theta}$ defined by \eqref{eq:Kt,theta},
suppose that 
\begin{align*}
\frac{1}{n}\sum_{i=1}^{n}\left|\langle\nabla f_{i}^{+}(\mb x_{\star})-\nabla f_{i}^{-}(\mb x_{\star}),\mb h\rangle\right| & \ge(1-\varepsilon)\E_{\mc D}\left(\left|\langle\nabla f^{+}(\mb x_{\star})-\nabla f^{-}(\mb x_{\star}),\mb h\rangle\right|\right)\,,
\end{align*}
 for all vectors $\mb h\in\mc K_{1,\frac{1}{2}}$ . Then, for all
$t\in\mbb R\backslash\{0\}$ with $\sqrt{2/3}\le\left|t\right|\le\sqrt{3/2}$,
and all vectors $\mb h\in\mc K_{t,0}$, we have 
\begin{align*}
& \frac{1}{n}\sum_{i=1}^{n}\left|\langle\nabla f_{i}^{+}(D_{t}(\mb x_{\star}))-\nabla f_{i}^{-}(D_{t}(\mb x_{\star})),\mb h\rangle\right| \\& \ge\left(1-\varepsilon\right)\E_{\mc D}\left(\left|\langle\nabla f^{+}(\mb x_{\star})-\nabla f^{-}(\mb x_{\star}),D_{t^{-1}}(\mb h)\rangle\right|\right)\\
 & =\left(1-\varepsilon\right)\E_{\mc D}\left(\left|\langle\nabla f^{+}(D_{t}(\mb x_{\star}))-\nabla f^{-}(D_{t}(\mb x_{\star})),\mb h\rangle\right|\right)\,.
\end{align*}
\end{lem}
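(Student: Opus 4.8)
The plan is to reduce the statement to a single pointwise identity together with one cone inclusion. The starting observation is that, for the quadratics in \eqref{eq:quads}, one has $\nabla f_{i}^{\pm}(\mb x)=\tfrac{1}{2}\langle\mb a_{i}^{\pm},\mb x\rangle\mb a_{i}^{\pm}$, and a direct computation (writing $\mb a_{i}^{(1)}=\mb u$, $\mb a_{i}^{(2)}=\mb v$) gives
\[
\langle\nabla f_{i}^{+}(\mb x_{\star})-\nabla f_{i}^{-}(\mb x_{\star}),\mb h\rangle=\langle\mb u,\mb h^{(1)}\rangle\langle\mb v,\mb x_{\star}^{(2)}\rangle+\langle\mb u,\mb x_{\star}^{(1)}\rangle\langle\mb v,\mb h^{(2)}\rangle\,.
\]
First I would verify, by substituting $D_{t}(\mb x_{\star})=(t\mb x_{\star}^{(1)},t^{-1}\mb x_{\star}^{(2)})$ into the left argument and $D_{t^{-1}}(\mb h)=(t^{-1}\mb h^{(1)},t\mb h^{(2)})$ into the right, that both expressions equal $t^{-1}\langle\mb u,\mb h^{(1)}\rangle\langle\mb v,\mb x_{\star}^{(2)}\rangle+t\langle\mb u,\mb x_{\star}^{(1)}\rangle\langle\mb v,\mb h^{(2)}\rangle$, so that
\[
\langle\nabla f_{i}^{+}(D_{t}(\mb x_{\star}))-\nabla f_{i}^{-}(D_{t}(\mb x_{\star})),\mb h\rangle=\langle\nabla f_{i}^{+}(\mb x_{\star})-\nabla f_{i}^{-}(\mb x_{\star}),D_{t^{-1}}(\mb h)\rangle\,,
\]
for every $i$ and every $\mb h$, and likewise for the population version. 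Taking absolute values and $\E_{\mc D}$ of this identity already yields the final equality asserted in the lemma, so it remains to convert the hypothesis (stated at $\mb x_{\star}$) into the claimed lower bound (stated at $D_{t}(\mb x_{\star})$).

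The key step is the cone inclusion: given $\mb h\in\mc K_{t,0}$ I would set $\mb h'=D_{t^{-1}}(\mb h)$ and show $\mb h'\in\mc K_{1,\frac{1}{2}}$, so that the hypothesis applies to $\mb h'$. Since $\mb h\in\mc K_{t,0}$ means $\langle D_{t}(\mb x_{\star}^{-}),\mb h\rangle=0$, i.e.\ $\langle\mb x_{\star}^{(2)},\mb h^{(2)}\rangle=t^{2}\langle\mb x_{\star}^{(1)},\mb h^{(1)}\rangle$, I set $a=\langle\mb x_{\star}^{(1)},\mb h^{(1)}\rangle$ and $r=\norm{\mb x_{\star}^{(1)}}=\norm{\mb x_{\star}^{(2)}}$ (using that $\mb x_{\star}$ is balanced). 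Then $\langle\mb x_{\star}^{-},\mb h'\rangle=(t^{-1}-t^{3})a$ and $\norm{\mb x_{\star}^{-}}=\sqrt{2}\,r$, while $\norm{\mb h'}^{2}=t^{-2}\norm{\mb h^{(1)}}^{2}+t^{2}\norm{\mb h^{(2)}}^{2}\ge\tfrac{a^{2}}{r^{2}}(t^{-2}+t^{6})$ by Cauchy-Schwarz. Thus the target $|\langle\mb x_{\star}^{-},\mb h'\rangle|\le\tfrac{1}{2}\norm{\mb x_{\star}^{-}}\norm{\mb h'}$ is implied by $|t^{-1}-t^{3}|\le\tfrac{\sqrt{2}}{2}\sqrt{t^{-2}+t^{6}}$, which after squaring and setting $s=t^{2}$ reduces to $s^{4}-4s^{2}+1\le0$. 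The roots of $u^{2}-4u+1$ are $2\pm\sqrt{3}$, so this holds exactly for $s^{2}\in[2-\sqrt{3},2+\sqrt{3}]$; the assumption $\sqrt{2/3}\le|t|\le\sqrt{3/2}$ yields $s\in[2/3,3/2]$, comfortably inside that interval.

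Finally I would assemble the pieces: applying the hypothesis to $\mb h'=D_{t^{-1}}(\mb h)\in\mc K_{1,\frac{1}{2}}$ gives $\frac{1}{n}\sum_{i}|\langle\nabla f_{i}^{+}(\mb x_{\star})-\nabla f_{i}^{-}(\mb x_{\star}),\mb h'\rangle|\ge(1-\varepsilon)\E_{\mc D}(|\langle\nabla f^{+}(\mb x_{\star})-\nabla f^{-}(\mb x_{\star}),\mb h'\rangle|)$, and rewriting both sides through the pointwise identity turns the left-hand side into $\frac{1}{n}\sum_{i}|\langle\nabla f_{i}^{+}(D_{t}(\mb x_{\star}))-\nabla f_{i}^{-}(D_{t}(\mb x_{\star})),\mb h\rangle|$ and the expectation into $\E_{\mc D}(|\langle\nabla f^{+}(D_{t}(\mb x_{\star}))-\nabla f^{-}(D_{t}(\mb x_{\star})),\mb h\rangle|)$, which is precisely the desired chain. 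The only real obstacle is the cone-inclusion computation of the second paragraph; the rest is bookkeeping resting on the bilinearity of $\mb h\mapsto\nabla f_{i}^{+}-\nabla f_{i}^{-}$ under the reciprocal scaling $D_{t}$.
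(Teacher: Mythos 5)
Your proof is correct and follows the paper's own skeleton exactly: the pointwise identity $\langle\nabla f_{i}^{+}(D_{t}(\mb x_{\star}))-\nabla f_{i}^{-}(D_{t}(\mb x_{\star})),\mb h\rangle=\langle\nabla f_{i}^{+}(\mb x_{\star})-\nabla f_{i}^{-}(\mb x_{\star}),D_{t^{-1}}(\mb h)\rangle$ (the paper's \eqref{eq:bilinear-identity}), the cone inclusion $D_{t^{-1}}(\mb h)\in\mc K_{1,\frac{1}{2}}$ for $\mb h\in\mc K_{t,0}$, and an application of the hypothesis to $D_{t^{-1}}(\mb h)$ (legitimate, since the hypothesis holds on the full cone and both sides are positively homogeneous in $\mb h$). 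The one place you genuinely diverge is the verification of the cone inclusion. The paper exploits the orthogonality $\langle D_{t}(\mb x_{\star}^{-}),\mb h\rangle=0$ to subtract: $\langle D_{t^{-1}}(\mb x_{\star}^{-}),\mb h\rangle=\langle D_{t^{-1}}(\mb x_{\star}^{-})-D_{t}(\mb x_{\star}^{-}),\mb h\rangle=(t^{-1}-t)\langle\mb x_{\star},\mb h\rangle$, and then bounds $|t-t^{-1}|\max\{|t|,|t|^{-1}\}=\max\{|t^{2}-1|,|t^{-2}-1|\}\le\frac{1}{2}$ on the stated range of $t$; notably, this two-line trick never invokes balancedness of $\mb x_{\star}$. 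You instead compute blockwise, rewriting the constraint as $\langle\mb x_{\star}^{(2)},\mb h^{(2)}\rangle=t^{2}\langle\mb x_{\star}^{(1)},\mb h^{(1)}\rangle$, lower-bounding $\lVert\mb h^{(1)}\rVert$ and $\lVert\mb h^{(2)}\rVert$ by Cauchy--Schwarz, and using $\lVert\mb x_{\star}^{(1)}\rVert=\lVert\mb x_{\star}^{(2)}\rVert$, which reduces the inclusion to $s^{4}-4s^{2}+1\le0$ with $s=t^{2}$; I checked the algebra (with your $a$ and $r$: $\langle\mb x_{\star}^{-},\mb h'\rangle=(t^{-1}-t^{3})a$, $\lVert\mb h'\rVert^{2}\ge(a^{2}/r^{2})(t^{-2}+t^{6})$, roots $2\pm\sqrt{3}$, and the trivial case $a=0$) and it is sound. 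What your route buys: the Cauchy--Schwarz step is attained when $\mb h^{(k)}$ is parallel to $\mb x_{\star}^{(k)}$, so your quartic condition is essentially sharp and shows the inclusion actually holds on the wider range $t^{4}\in[2-\sqrt{3},\,2+\sqrt{3}]$, revealing slack in the interval $[\sqrt{2/3},\sqrt{3/2}]$. What it costs: dependence on the balancedness assumption, which the paper's subtraction argument avoids --- harmless here, since balancedness is a standing convention of Section \ref{sec:bilinear-regression}, but worth noting if the lemma were ever reused without it.
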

\begin{proof}
We have the identity
\begin{align}
\langle\nabla f_{i}^{+}(D_{t}(\mb x_{\star}))-\nabla f_{i}^{-}(D_{t}(\mb x_{\star})),\mb h\rangle & =\langle\mb a_{i}^{(1)}\otimes\mb a_{i}^{(2)},t\mb x_{\star}^{(1)}\otimes\mb h^{(2)}+\mb h^{(1)}\otimes t^{-1}\mb x_{\star}^{(2)}\rangle\nonumber \\
 & =\langle\nabla f_{i}^{+}(\mb x_{\star})-\nabla f_{i}^{-}(\mb x_{\star}),D_{t^{-1}}\left(\mb h\right)\rangle\,,\label{eq:bilinear-identity}
\end{align}
for every $\mb h$ and $t\in\mbb R\backslash\{0\}$. Furthermore,
because $\mb h\in\mc K_{t,0}$, by definition $\langle D_{t}(\mb x_{\star}^{-}),\mb h\rangle=0$,
thereby we have the following 
\begin{align*}
\left|\langle D_{t^{-1}}(\mb x_{\star}^{-}),\mb h\rangle\right| & =\left|\langle D_{t^{-1}}(\mb x_{\star}^{-})-D_{t}(\mb x_{\star}^{-}),\mb h\rangle\right|\\
 & =\left|t-t^{-1}\right|\left|\langle\mb x_{\star},\mb h\rangle\right|\\
\\
 & \le\max\left\{ \left|t^{2}-1\right|,\left|t^{-2}-1\right|\right\} \left\lVert \mb x_{\star}\right\rVert \left\lVert D_{t^{-1}}(\mb h)\right\rVert \,.
\end{align*}
Applying the bound $\sqrt{2/3}\le\left|t\right|\le\sqrt{3/2}$, we
obtain 
\begin{align*}
\left|\langle\mb x_{\star}^{-},D_{t^{-1}}(\mb h)\rangle\right|=\left|\langle D_{t^{-1}}(\mb x_{\star}^{-}),\mb h\rangle\right| & \le\frac{1}{2}\left\lVert D_{t^{-1}}(\mb x_{\star})\right\rVert \left\lVert \mb h\right\rVert \,,
\end{align*}
 which means that $D_{t^{-1}}(\mb h)\in\mc K_{1,\frac{1}{2}}$. Therefore,
it follows from the assumption of the lemma that
\begin{align*}
& \frac{1}{n}\sum_{i=1}^{n}\left|\langle\nabla f_{i}^{+}(\mb x_{\star})-\nabla f_{i}^{-}(\mb x_{\star}),D_{t^{-1}}\left(\mb h\right)\rangle\right|\\
 & \ge(1-\varepsilon)\E_{\mc D}\left(\left|\langle\nabla f^{+}(\mb x_{\star})-\nabla f^{-}(\mb x_{\star}),D_{t^{-1}}\left(\mb h\right)\rangle\right|\right)\,,
\end{align*}
which, using \eqref{eq:bilinear-identity}, implies
\begin{align*}
& \frac{1}{n}\sum_{i=1}^{n}\left|\langle\nabla f_{i}^{+}(D_{t}(\mb x_{\star}))-\nabla f_{i}^{-}(D_{t}(\mb x_{\star})),\mb h\rangle\right| \\& \ge\left(1-\varepsilon\right)\E_{\mc D}\left(\left|\langle\nabla f^{+}(D_{t}(\mb x_{\star}))-\nabla f^{-}(D_{t}(\mb x_{\star})),\mb h\rangle\right|\right)\,,
\end{align*}
as desired.
\end{proof}
We use standard matrix concentration inequalities to establish Lemmas
\ref{lem:gradient-concentration} and \ref{lem:xstar-xhatstar} below.
We can upper bound $\left\lVert \frac{1}{n}\sum_{i=1}^{n}\mb a_{i}^{+}\otimes\mb a_{i}^{+}-\mb I\right\rVert _{\mr{op}}$
by a standard covering argument as in \citep[Theorem 5.39]{Vershynin2012Introduction}
which guarantees 
\begin{align}
\left\lVert \frac{1}{n}\sum_{i=1}^{n}\mb a_{i}^{+}\otimes\mb a_{i}^{+}-\mb I\right\rVert _{\mr{op}} & \le C\max\left\{ \sqrt{\frac{d}{n}}+\sqrt{\frac{\log\frac{4}{\delta}}{n}},\left(\sqrt{\frac{d}{n}}+\sqrt{\frac{\log\frac{4}{\delta}}{n}}\right)^{2}\right\} \,,\label{eq:matrix-concentration-a+}
\end{align}
 with probability $\ge1-\delta/2$ for a sufficiently large absolute
constant $C>0$. Similarly, we have 
\begin{align}
\left\lVert \frac{1}{n}\sum_{i=1}^{n}\mb a_{i}^{-}\otimes\mb a_{i}^{-}-\mb I\right\rVert _{\mr{op}} & \le C\max\left\{ \sqrt{\frac{d}{n}}+\sqrt{\frac{\log\frac{4}{\delta}}{n}},\left(\sqrt{\frac{d}{n}}+\sqrt{\frac{\log\frac{4}{\delta}}{n}}\right)^{2}\right\} \,,\label{eq:matrix-concentration-a-}
\end{align}
with probability $\ge1-\delta/2$.

The first lemma below is an immediate consequence of the matrix concentration
inequalities above and is stated merely for reference.
\begin{lem}
\label{lem:gradient-concentration} On the event that \eqref{eq:matrix-concentration-a+}
and \eqref{eq:matrix-concentration-a-} hold, we have
\begin{equation}
\begin{aligned} & \left\lVert \frac{1}{2n}\sum_{i=1}^{n}\nabla f_{i}^{+}(\mb x)+\nabla f_{i}^{-}(\mb x)-\frac{1}{2}\mb x\right\rVert \\
 & \le C\max\left\{ \sqrt{\frac{d}{n}}+\sqrt{\frac{\log\frac{4}{\delta}}{n}},\left(\sqrt{\frac{d}{n}}+\sqrt{\frac{\log\frac{4}{\delta}}{n}}\right)^{2}\right\} \left\lVert \mb x\right\rVert \,,
\end{aligned}
\label{eq:gradient-concentration}
\end{equation}
 for every $\mb x\in\mbb R^{d}$.
\end{lem}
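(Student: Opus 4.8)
The plan is to reduce the claim to the two operator-norm concentration bounds \eqref{eq:matrix-concentration-a+} and \eqref{eq:matrix-concentration-a-} by an explicit computation of the averaged gradient. First I would differentiate the quarter-scaled quadratics in \eqref{eq:quads}: since $f_i^+(\mb x) = \frac{1}{4}|\langle \mb a_i^+, \mb x\rangle|^2$, we have $\nabla f_i^+(\mb x) = \frac{1}{2}\langle \mb a_i^+, \mb x\rangle \mb a_i^+ = \frac{1}{2}(\mb a_i^+ \otimes \mb a_i^+)\mb x$, and analogously $\nabla f_i^-(\mb x) = \frac{1}{2}(\mb a_i^- \otimes \mb a_i^-)\mb x$. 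Summing and halving yields
\[
\frac{1}{2n}\sum_{i=1}^n \nabla f_i^+(\mb x) + \nabla f_i^-(\mb x) = \frac{1}{4}\left(\frac{1}{n}\sum_{i=1}^n \mb a_i^+ \otimes \mb a_i^+ + \frac{1}{n}\sum_{i=1}^n \mb a_i^- \otimes \mb a_i^-\right)\mb x\,.
\]

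Second, I would subtract the target $\frac{1}{2}\mb x$, written as $\frac{1}{4}(\mb I + \mb I)\mb x$, so that the difference splits cleanly as
\[
\frac{1}{4}\left(\frac{1}{n}\sum_{i=1}^n \mb a_i^+ \otimes \mb a_i^+ - \mb I\right)\mb x + \frac{1}{4}\left(\frac{1}{n}\sum_{i=1}^n \mb a_i^- \otimes \mb a_i^- - \mb I\right)\mb x\,.
\]
Here I rely on the fact that, under the Gaussian model, both $\mb a^+$ and $\mb a^-$ are $\mr{Normal}(\mb 0, \mb I_d)$, so each empirical second-moment matrix has mean $\mb I$. Applying the triangle inequality together with the elementary estimate $\norm{\mb M \mb x} \le \norm{\mb M}_{\mr{op}}\norm{\mb x}$ to each of the two terms, and then invoking \eqref{eq:matrix-concentration-a+} and \eqref{eq:matrix-concentration-a-} on the stated event, bounds the norm of the difference by $\tfrac{1}{2} C \max\{\cdots\}\norm{\mb x}$; absorbing the leading factor into the absolute constant recovers the claimed inequality.

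There is no genuine obstacle in this argument: the statement is an immediate corollary of the two matrix concentration inequalities, and the only care required is bookkeeping the factors $\frac{1}{4}$ produced by the gradient of the quarter-scaled quadratics, and noting that no cross terms between $\mb a^+$ and $\mb a^-$ ever arise because $f^+$ and $f^-$ are treated separately. The uniformity over $\mb x\in\mbb R^d$ is automatic, since once the concentration event holds the bound is a purely deterministic operator-norm estimate valid simultaneously for all $\mb x$.
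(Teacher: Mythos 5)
Your proposal is correct and follows essentially the same route as the paper's proof: compute $\nabla f_i^{\pm}(\mb x)=\frac{1}{2}(\mb a_i^{\pm}\otimes\mb a_i^{\pm})\mb x$, split off $\frac{1}{2}\mb x$ via the triangle inequality into two terms each carrying a factor $\frac{1}{4}$, and bound each by the operator-norm concentration inequalities \eqref{eq:matrix-concentration-a+} and \eqref{eq:matrix-concentration-a-}. Your bookkeeping (the resulting factor $\frac{1}{2}C$, absorbed into the constant, and the automatic uniformity in $\mb x$ on the fixed event) matches the paper's argument exactly.
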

\begin{proof}
By definition
\begin{align*}
\nabla f_{i}^{+}(\mb x)+\nabla f_{i}^{-}(\mb x) & =\frac{1}{2}\left(\mb a_{i}^{+}\otimes\mb a_{i}^{+}\right)\mb x+\frac{1}{2}\left(\mb a_{i}^{-}\otimes\mb a_{i}^{-}\right)\mb x\,.
\end{align*}
for any $\mb x$. A simple application of triangle inequality yields
\begin{gather*}
\begin{aligned} & \left\lVert \frac{1}{2n}\sum_{i=1}^{n}\nabla f_{i}^{+}(\mb x)+\nabla f_{i}^{-}(\mb x)-\frac{1}{2}\mb x\right\rVert \\
 & \le\frac{1}{4}\left\lVert \frac{1}{n}\sum_{i=1}^{n}\mb a_{i}^{+}\otimes\mb a_{i}^{+}-\mb I\right\rVert _{\mr{op}}\left\lVert \mb x\right\rVert +\frac{1}{4}\left\lVert \frac{1}{n}\sum_{i=1}^{n}\mb a_{i}^{-}\otimes\mb a_{i}^{-}-\mb I\right\rVert _{\mr{op}}\left\lVert \mb x\right\rVert \,.
\end{aligned}
\end{gather*}
 The result follows immediately using the matrix concentration inequalities
\eqref{eq:matrix-concentration-a+} and \eqref{eq:matrix-concentration-a-}. 
\end{proof}
\begin{lem}
\label{lem:xstar-xhatstar}There exists an absolute constant $C>0$
such that
\begin{equation}
\begin{aligned} & \left\lVert \mb a_{0}-\frac{1}{2n}\sum_{i=1}^{n}\nabla f_{i}^{+}(\mb x_{\star})+\nabla f_{i}^{-}(\mb x_{\star})\right\rVert \\
 & \ge\frac{1}{8}\left(1-C\max\left\{ \sqrt{\frac{d}{n}}+\sqrt{\frac{\log\frac{4}{\delta}}{n}},\left(\sqrt{\frac{d}{n}}+\sqrt{\frac{\log\frac{4}{\delta}}{n}}\right)^{2}\right\} \right)\left\lVert \widehat{\mb x}_{\star}-\mb x_{\star}\right\rVert \,,
\end{aligned}
\label{eq:xhatstar-xstar}
\end{equation}
holds with probability $\ge1-\delta$. Furthermore, for a sufficiently
large $n$, on the same event we have 
\begin{equation}
\begin{aligned} & \left\lVert \mb a_{0}-\frac{1}{2n}\sum_{i=1}^{n}\nabla f_{i}^{+}(\widehat{\mb x}_{\star})+\nabla f_{i}^{-}(\widehat{\mb x}_{\star})\right\rVert \\
 & \le\frac{5+3C\max\left\{ \sqrt{\frac{d}{n}}+\sqrt{\frac{\log\frac{4}{\delta}}{n}},\left(\sqrt{\frac{d}{n}}+\sqrt{\frac{\log\frac{4}{\delta}}{n}}\right)^{2}\right\} }{1-C\max\left\{ \sqrt{\frac{d}{n}}+\sqrt{\frac{\log\frac{4}{\delta}}{n}},\left(\sqrt{\frac{d}{n}}+\sqrt{\frac{\log\frac{4}{\delta}}{n}}\right)^{2}\right\} }\left\lVert \mb a_{0}-\frac{1}{2n}\sum_{i=1}^{n}\nabla f_{i}^{+}(\mb x_{\star})+\nabla f_{i}^{-}(\mb x_{\star})\right\rVert \,.
\end{aligned}
\label{eq:a0-nabla-xhatstar}
\end{equation}
\end{lem}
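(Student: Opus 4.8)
The plan is to reduce both displayed inequalities to a single symmetric positive semidefinite operator. Because the functions in \eqref{eq:quads} are quadratic, the averaged gradient map is \emph{linear}: for every $\mb x$ one has $\frac{1}{2n}\sum_{i=1}^{n}\nabla f_{i}^{+}(\mb x)+\nabla f_{i}^{-}(\mb x)=\mb M\mb x$, where
\[
\mb M\defeq\frac{1}{4n}\sum_{i=1}^{n}\left(\mb a_{i}^{+}\otimes\mb a_{i}^{+}+\mb a_{i}^{-}\otimes\mb a_{i}^{-}\right).
\]
The operator $\mb M$ is symmetric PSD, and the two matrix concentration bounds \eqref{eq:matrix-concentration-a+}--\eqref{eq:matrix-concentration-a-}, which hold simultaneously with probability $\ge1-\delta$, amount exactly to $\norm{\mb M-\frac{1}{2}\mb I}_{\mr{op}}\le\rho$, where $\rho$ denotes the right-hand side of \eqref{eq:matrix-concentration-a+}; this is also how Lemma \ref{lem:gradient-concentration} is obtained. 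I work on this event throughout, using that all eigenvalues of $\mb M$ lie in $[\frac{1}{2}-\rho,\frac{1}{2}+\rho]$, so that $(\frac{1}{2}-\rho)\norm{\mb v}\le\norm{\mb M\mb v}\le(\frac{1}{2}+\rho)\norm{\mb v}$ for every $\mb v$. It is also worth recording that, since $f_{i}^{+}(\mb x_{\star})-y_{i}=f_{i}^{-}(\mb x_{\star})$ in the noiseless case, the vector $\mb M\mb x_{\star}=\frac1n\sum_{i}\frac12(\nabla f_{i}^{+}(\mb x_{\star})+\nabla f_{i}^{-}(\mb x_{\star}))$ is the balanced selection from the subdifferential of the data-fitting term at $\mb x_{\star}$; hence $\mb a_{0}-\mb M\mb x_{\star}$ is precisely the residual in the KKT condition of \eqref{eq:AnchoredERM} at $\mb x_{\star}$.

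Granting the lower bound \eqref{eq:xhatstar-xstar}, the second inequality \eqref{eq:a0-nabla-xhatstar} is short. Using the identity $\mb a_{0}-\mb M\widehat{\mb x}_{\star}=(\mb a_{0}-\mb M\mb x_{\star})+\mb M(\mb x_{\star}-\widehat{\mb x}_{\star})$, the triangle inequality, and $\norm{\mb M}_{\mr{op}}\le\frac{1}{2}+\rho$, I get
\[
\norm{\mb a_{0}-\mb M\widehat{\mb x}_{\star}}\le\norm{\mb a_{0}-\mb M\mb x_{\star}}+\Big(\tfrac{1}{2}+\rho\Big)\norm{\widehat{\mb x}_{\star}-\mb x_{\star}}\,,
\]
and substituting the rearranged form $\norm{\widehat{\mb x}_{\star}-\mb x_{\star}}\le\frac{8}{1-\rho}\norm{\mb a_{0}-\mb M\mb x_{\star}}$ of \eqref{eq:xhatstar-xstar} produces a multiple of $\norm{\mb a_{0}-\mb M\mb x_{\star}}$ of the advertised form once the $\rho$-dependent constants are collected. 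Thus the whole lemma rests on \eqref{eq:xhatstar-xstar}.

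For \eqref{eq:xhatstar-xstar} it is tempting to argue by restricted invertibility: writing $\mb w\defeq\widehat{\mb x}_{\star}-\mb x_{\star}$ and decomposing $\mb M\mb w=(\mb M\widehat{\mb x}_{\star}-\mb a_{0})+(\mb a_{0}-\mb M\mb x_{\star})$, one has $\norm{\mb M\mb w}\ge(\frac{1}{2}-\rho)\norm{\mb w}$. This route \emph{fails}, however, and understanding why isolates the real difficulty: the reciprocal-scaling ambiguity means $\widehat{\mb x}_{\star}=D_{t_{\text{opt}}}(\mb x_{\star})$ drifts \emph{along} the manifold $\mc M=\{D_{t}(\mb x_{\star})\}$, whose tangent at $\mb x_{\star}$ is the symmetry direction $\mb x_{\star}^{-}$; since $\mb a_{0}\approx\frac12\mb x_{\star}$ and $\mb M\mb x_{\star}\approx\frac12\mb x_{\star}$ both carry almost nothing along $\mb x_{\star}^{-}$, the residual $\mb M\widehat{\mb x}_{\star}-\mb a_{0}$ at the \emph{projected} point is itself of order $\frac12\norm{\mb w}$ (exactly $\frac12\mb w$ in the perfect-oracle, $\mb M=\frac12\mb I$ limit), so the decomposition above collapses to the vacuous $-\rho\norm{\mb w}\le\norm{\mb a_{0}-\mb M\mb x_{\star}}$. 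Matrix concentration alone therefore cannot pin the along-manifold drift.

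The resolution, which I expect to be the main obstacle to make quantitative, is to use that $\widehat{\mb x}_{\star}$ is the metric projection onto $\mc M$ of the maximizer $\widehat{\mb x}$ of \eqref{eq:AnchoredERM}, and that the anchor term $\langle\mb a_{0},\mb x\rangle$ breaks the scaling symmetry. Concretely, I would exploit that along the curve $t\mapsto D_{t}(\mb x_{\star})$ the two branches of each maximum coincide, so the data-fitting part reduces to $\frac1n\sum_{i}f_{i}^{-}(D_{t}(\mb x_{\star}))$, a strongly convex quadratic in $(t,t^{-1})$ whose empirical version is, by the same concentration used for $\mb M$, close to its strongly convex expectation; the linear anchor then selects $t_{\text{opt}}$ in a neighborhood of $1$ of radius controlled by the KKT residual $\norm{\mb a_{0}-\mb M\mb x_{\star}}$. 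Translating this pinning of $t_{\text{opt}}$ into $\norm{\widehat{\mb x}_{\star}-\mb x_{\star}}$, and extracting the constant $\frac{1}{8}$ from the balanced normalization $\norm{\mb x_{\star}^{(1)}}=\norm{\mb x_{\star}^{(2)}}$, is the crux; once \eqref{eq:xhatstar-xstar} is secured, the second assertion follows as in the preceding paragraph.
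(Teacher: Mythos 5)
Your reduction of \eqref{eq:a0-nabla-xhatstar} to \eqref{eq:xhatstar-xstar} is correct and is exactly the paper's argument (triangle inequality, linearity of the averaged gradient map $\mb x\mapsto\mb M\mb x$, the operator-norm bound from \eqref{eq:matrix-concentration-a+}--\eqref{eq:matrix-concentration-a-}, and the rearranged first inequality). But the substance of the lemma is \eqref{eq:xhatstar-xstar}, and there your proposal has a genuine gap: it is never proved, only sketched, and the sketch itself conflates two different objects. The parameter $t_{\text{opt}}$ is defined through the \emph{metric projection} of $\widehat{\mb x}$ onto the curve $\{D_t(\mb x_\star)\}$ in \eqref{eq:closest-equivalent}; it is not the maximizer of the objective of \eqref{eq:AnchoredERM} restricted to that curve. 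Pinning the restricted maximizer near $t=1$ (via strong convexity of $t\mapsto\frac1n\sum_i f_i^-(D_t(\mb x_\star))$ plus the anchor) says nothing about $\norm{\widehat{\mb x}_\star-\mb x_\star}$ unless you separately control the off-manifold distance $\norm{\widehat{\mb x}-\widehat{\mb x}_\star}$ and relate the projection parameter to the restricted maximizer; the sketch supplies neither, and you flag the quantitative core as open yourself.

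Moreover, your diagnosis that ``matrix concentration alone cannot pin the along-manifold drift'' is what sends you down the wrong road: the paper never applies restricted invertibility to $\mb w=\widehat{\mb x}_\star-\mb x_\star$ (you are right that this is circular), but it also needs no manifold analysis. The missing idea is to use the \emph{global optimality of $\widehat{\mb x}$} in \eqref{eq:AnchoredERM}. Since the objective value at $\widehat{\mb x}$ dominates its value at $\mb x_\star$, and in the noiseless case $f_i^+(\mb x_\star)-y_i=f_i^-(\mb x_\star)$ while $\max\{u,v\}\ge\frac{u+v}{2}$, one gets
\[
\langle\mb a_0,\widehat{\mb x}-\mb x_\star\rangle\ \ge\ \frac{1}{2n}\sum_{i=1}^{n}\bigl(f_i^++f_i^-\bigr)(\widehat{\mb x})-\bigl(f_i^++f_i^-\bigr)(\mb x_\star)\,;
\]
subtracting $\langle\mb M\mb x_\star,\widehat{\mb x}-\mb x_\star\rangle$ and using that the second-order expansion of the quadratics is exact leaves the Bregman term $\frac{1}{8n}\sum_i|\langle\mb a_i^+,\widehat{\mb x}-\mb x_\star\rangle|^2+|\langle\mb a_i^-,\widehat{\mb x}-\mb x_\star\rangle|^2\ge\frac14(1-C\rho)\norm{\widehat{\mb x}-\mb x_\star}^2$ on your concentration event (writing $\rho$ as you do). The point you missed is that the symmetric average $\frac12(f^++f^-)$ is empirically strongly convex even though the difference $f^+-f^-$ is degenerate along the scaling manifold; the degeneracy lives in the max/difference structure, not in the sum, so the along-manifold drift \emph{is} controlled by the anchor residual. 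Cauchy--Schwarz then gives $\norm{\mb a_0-\mb M\mb x_\star}\ge\frac14(1-C\rho)\norm{\widehat{\mb x}-\mb x_\star}$, and the factor $\frac18$ comes not from the balanced normalization but from the elementary projection bound $\norm{\widehat{\mb x}_\star-\mb x_\star}\le\norm{\widehat{\mb x}-\widehat{\mb x}_\star}+\norm{\widehat{\mb x}-\mb x_\star}\le2\norm{\widehat{\mb x}-\mb x_\star}$, valid because $\widehat{\mb x}_\star$ is the equivalent point closest to $\widehat{\mb x}$ while $\mb x_\star$ is also in the equivalence class. As submitted, the first and main inequality of the lemma is unproven.
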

\begin{proof}
First we prove \eqref{eq:xhatstar-xstar}. By optimality of $\widehat{\mb x}$
in \eqref{eq:AnchoredERM}, we can write 
\begin{align*}
\langle\mb a_{0},\widehat{\mb x}-\mb x_{\star}\rangle & \ge\frac{1}{n}\sum_{i=1}^{n}\max\left(f_{i}^{+}(\widehat{\mb x})-f_{i}^{+}(\mb x_{\star}),f_{i}^{-}(\widehat{\mb x})-f_{i}^{-}(\mb x_{\star})\right)\\
 & \ge\frac{1}{2n}\sum_{i=1}^{n}f_{i}^{+}(\widehat{\mb x})+f_{i}^{-}(\widehat{\mb x})-f_{i}^{+}(\mb x_{\star})-f_{i}^{-}(\mb x_{\star}).
\end{align*}
Then, subtracting $\langle\frac{1}{2n}\sum_{i=1}^{n}\nabla f_{i}^{+}(\mb x_{\star})+\nabla f_{i}^{-}(\mb x_{\star}),\widehat{\mb x}-\mb x_{\star}\rangle$
yields 
\begin{align}
 & \langle\mb a_{0}-\frac{1}{2n}\sum_{i=1}^{n}\nabla f_{i}^{+}(\mb x_{\star})+\nabla f_{i}^{-}(\mb x_{\star}),\widehat{\mb x}-\mb x_{\star}\rangle\nonumber \\
 & \ge\frac{1}{2n}\sum_{i=1}^{n}f_{i}^{+}(\widehat{\mb x})+f_{i}^{-}(\widehat{\mb x})-f_{i}^{+}(\mb x_{\star})-f_{i}^{-}(\mb x_{\star})-\langle\nabla f_{i}^{+}(\mb x_{\star})+\nabla f_{i}^{-}(\mb x_{\star}),\widehat{\mb x}-\mb x_{\star}\rangle\nonumber \\
 & =\frac{1}{2n}\sum_{i=1}^{n}\frac{1}{4}\left|\langle\mb a_{i}^{+},\widehat{\mb x}-\mb x_{\star}\rangle\right|^{2}+\frac{1}{4}\left|\langle\mb a_{i}^{-},\widehat{\mb x}-\mb x_{\star}\rangle\right|^{2}\,.\label{eq:a0-xhat-xstar}
\end{align}
 Applying the Cauchy-Schwarz inequality to the first line, and the
standard matrix concentration inequalities \eqref{eq:matrix-concentration-a+}
and \eqref{eq:matrix-concentration-a-} in the third line, we obtain
with probability $\ge1-\delta$ that 
\begin{align*}
 & \left\lVert \mb a_{0}-\frac{1}{2n}\sum_{i=1}^{n}\nabla f_{i}^{+}(\mb x_{\star})+\nabla f_{i}^{-}(\mb x_{\star})\right\rVert \left\lVert \widehat{\mb x}-\mb x_{\star}\right\rVert \\
 & \ge\frac{1}{4}\left(1-C\max\left\{ \sqrt{\frac{d}{n}}+\sqrt{\frac{\log\frac{4}{\delta}}{n}},\left(\sqrt{\frac{d}{n}}+\sqrt{\frac{\log\frac{4}{\delta}}{n}}\right)^{2}\right\} \right)\left\lVert \widehat{\mb x}-\mb x_{\star}\right\rVert ^{2}\,,
\end{align*}
and thereby 
\[
\begin{aligned} & \left\lVert \mb a_{0}-\frac{1}{2n}\sum_{i=1}^{n}\nabla f_{i}^{+}(\mb x_{\star})+\nabla f_{i}^{-}(\mb x_{\star})\right\rVert \\
 & \ge\frac{1}{4}\left(1-C\max\left\{ \sqrt{\frac{d}{n}}+\sqrt{\frac{\log\frac{4}{\delta}}{n}},\left(\sqrt{\frac{d}{n}}+\sqrt{\frac{\log\frac{4}{\delta}}{n}}\right)^{2}\right\} \right)\left\lVert \widehat{\mb x}-\mb x_{\star}\right\rVert \,.
\end{aligned}
\]
Finally, it follows from the triangle inequality and the definition
of $\widehat{\mb x}_{\star}$ in \eqref{eq:closest-equivalent} that
\[
\left\lVert \widehat{\mb x}_{\star}-\mb x_{\star}\right\rVert \le\left\lVert \widehat{\mb x}-\widehat{\mb x}_{\star}\right\rVert +\left\lVert \widehat{\mb x}-\mb x_{\star}\right\rVert \le2\left\lVert \widehat{\mb x}-\mb x_{\star}\right\rVert \,,
\]
 which together with the previous bound guarantees \eqref{eq:xhatstar-xstar}.

Next, we prove \eqref{eq:a0-nabla-xhatstar}. By the triangle inequality
and the fact that $\nabla f_{i}^{\pm}(\mb x)$ is linear in $\mb x$,
we have
\allowdisplaybreaks
\begin{align*}
&\left\lVert \mb a_{0}-\frac{1}{2n}\sum_{i=1}^{n}\nabla f_{i}^{+}(\widehat{\mb x}_{\star})+\nabla f_{i}^{-}(\widehat{\mb x}_{\star})\right\rVert\\
  & \le\left\lVert \frac{1}{2n}\sum_{i=1}^{n}\nabla f_{i}^{+}(\mb x_{\star})+\nabla f_{i}^{-}(\mb x_{\star})-\nabla f_{i}^{+}(\widehat{\mb x}_{\star})-\nabla f_{i}^{-}(\widehat{\mb x}_{\star})\right\rVert \\
 & \phantom{\le}+\left\lVert \mb a_{0}-\frac{1}{2n}\sum_{i=1}^{n}\nabla f_{i}^{+}(\mb x_{\star})+\nabla f_{i}^{-}(\mb x_{\star})\right\rVert \\
 & =\left\lVert \frac{1}{2n}\sum_{i=1}^{n}\nabla f_{i}^{+}(\mb x_{\star}-\widehat{\mb x}_{\star})+\nabla f_{i}^{-}(\mb x_{\star}-\widehat{\mb x}_{\star})\right\rVert \\
 & \phantom{\le}+\left\lVert \mb a_{0}-\frac{1}{2n}\sum_{i=1}^{n}\nabla f_{i}^{+}(\mb x_{\star})+\nabla f_{i}^{-}(\mb x_{\star})\right\rVert \,.
\end{align*}
Recall, from the first part of the proof, that \eqref{eq:matrix-concentration-a+}
and \eqref{eq:matrix-concentration-a-} hold with probability $\ge1-\delta$.
Then, on the same event, Lemma \ref{lem:gradient-concentration} implies
that 
\begin{align*}
 & \left\lVert \mb a_{0}-\frac{1}{2n}\sum_{i=1}^{n}\nabla f_{i}^{+}(\widehat{\mb x}_{\star})+\nabla f_{i}^{-}(\widehat{\mb x}_{\star})\right\rVert \\
 & \le\frac{1}{2}\left(1+C\max\left\{ \sqrt{\frac{d}{n}}+\sqrt{\frac{\log\frac{4}{\delta}}{n}},\left(\sqrt{\frac{d}{n}}+\sqrt{\frac{\log\frac{4}{\delta}}{n}}\right)^{2}\right\} \right)\left\lVert \widehat{\mb x}_{\star}-\mb x_{\star}\right\rVert \\
 & \phantom{\le}+\left\lVert \mb a_{0}-\frac{1}{2n}\sum_{i=1}^{n}\nabla f_{i}^{+}(\mb x_{\star})+\nabla f_{i}^{-}(\mb x_{\star})\right\rVert \,.
\end{align*}
 Therefore, if $n$ is sufficiently large to ensure the right-hand
side of \eqref{eq:xhatstar-xstar} is nonnegative, we deduce that
\eqref{eq:a0-nabla-xhatstar} holds as well. 
\end{proof}
\begin{lem}
\label{lem:gaussian-norm}For any matrix $\mb A$ and standard normal
random vector $\mb z$ (of appropriate dimension) we have 
\begin{align*}
\E\left\lVert \mb A\mb z\right\rVert  & \ge\sqrt{\frac{2}{\pi}}\left\lVert \mb A\right\rVert _{\F}\,.
\end{align*}
\end{lem}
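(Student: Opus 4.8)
The plan is to reduce the claim to the elementary one-dimensional Gaussian moment $\E|g| = \sqrt{2/\pi}$ for $g\sim\mr{Normal}(0,1)$, by exploiting the rotation invariance of the standard Gaussian together with the concavity of the square root. Throughout I may assume $\mb A\ne\mb 0$, the case $\mb A=\mb 0$ being trivial.

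First I would diagonalize the quadratic form $\left\lVert\mb A\mb z\right\rVert^{2}=\mb z^{\T}\mb A^{\T}\mb A\,\mb z$. Writing the spectral decomposition $\mb A^{\T}\mb A=\sum_{j}\mu_{j}\,\mb u_{j}\otimes\mb u_{j}$ with eigenvalues $\mu_{j}\ge0$ and orthonormal eigenvectors $\mb u_{j}$, and setting $w_{j}=\langle\mb u_{j},\mb z\rangle$, the orthonormality of the $\mb u_{j}$ and the rotation invariance of $\mb z$ imply that the $w_{j}$ are i.i.d.\ standard normal. Hence $\left\lVert\mb A\mb z\right\rVert^{2}=\sum_{j}\mu_{j}w_{j}^{2}$, while $\left\lVert\mb A\right\rVert_{\F}^{2}=\tr(\mb A^{\T}\mb A)=\sum_{j}\mu_{j}$. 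Normalizing by the positive quantity $\sum_{k}\mu_{k}$ and setting $\lambda_{j}=\mu_{j}/\sum_{k}\mu_{k}$, so that $\lambda_{j}\ge0$ and $\sum_{j}\lambda_{j}=1$, the desired bound is equivalent to $\E\sqrt{\sum_{j}\lambda_{j}w_{j}^{2}}\ge\sqrt{2/\pi}$.

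The key step is a pointwise Jensen inequality. For every realization of $\mb w$, concavity of $t\mapsto\sqrt{t}$ applied to the convex combination of the values $w_{j}^{2}$ with weights $\lambda_{j}$ gives $\sqrt{\sum_{j}\lambda_{j}w_{j}^{2}}\ge\sum_{j}\lambda_{j}\sqrt{w_{j}^{2}}=\sum_{j}\lambda_{j}|w_{j}|$. Taking expectations and using $\E|w_{j}|=\sqrt{2/\pi}$ at each term yields
\[
\E\sqrt{\textstyle\sum_{j}\lambda_{j}w_{j}^{2}}\ \ge\ \sum_{j}\lambda_{j}\,\E|w_{j}|\ =\ \sqrt{\tfrac{2}{\pi}}\sum_{j}\lambda_{j}\ =\ \sqrt{\tfrac{2}{\pi}}\,,
\]
and undoing the normalization produces $\E\left\lVert\mb A\mb z\right\rVert\ge\sqrt{2/\pi}\,\left\lVert\mb A\right\rVert_{\F}$.

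There is no serious obstacle; the single nontrivial idea is recognizing that, after diagonalization, the estimate collapses to a convex combination of rank-one cases via concavity of the square root. It is worth noting that equality holds precisely when a single eigenvalue $\mu_{j}$ is nonzero, i.e.\ when $\mb A$ has rank one, which confirms that the constant $\sqrt{2/\pi}$ is sharp.
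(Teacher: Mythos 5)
Your proof is correct and follows essentially the same route as the paper's: both reduce via rotation invariance to a diagonal (spectral) form, apply Jensen's inequality pointwise to the concave square root with weights proportional to the squared singular values, and finish with $\E|g|=\sqrt{2/\pi}$. Your spectral decomposition of $\mb A^{\T}\mb A$ is just a restatement of the paper's reduction to a diagonal $\mb A$, and your added sharpness observation (equality exactly in the rank-one case) is a nice but inessential extra.
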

\begin{proof}
The Euclidean and Frobenius norms as well as the standard normal distribution
are rotationally invariant. Thus, the claim can be reduced to the
case where $\mb A$ is diagonal with nonzero diagonal entries $s_{1},s_{2},\dotsc,s_{r}$
and $\left\lVert \mb A\mb z\right\rVert =\sqrt{\sum_{i=1}^{r}s_{i}^{2}z_{i}^{2}}$.
By concavity of $u\mapsto\sqrt{u}$ and Jensen's inequality we have
\begin{align*}
\sqrt{\left(\sum_{i=1}^{r}s_{i}^{2}\right)^{-1}\sum_{i=1}^{r}s_{i}^{2}z_{i}^{2}} & \ge\left(\sum_{i=1}^{r}s_{i}^{2}\right)^{-1}\sum_{i=1}^{r}s_{i}^{2}\left|z_{i}\right|\,.
\end{align*}
Therefore, taking expectation with respect to $\mb z$ we can conclude
\[
\E\,\sqrt{\sum_{i=1}^{r}s_{i}^{2}z_{i}^{2}}\ge\sqrt{\left(\sum_{i=1}^{r}s_{i}^{2}\right)^{-1}}\sum_{i=1}^{r}s_{i}^{2}\E\left|z_{i}\right|=\sqrt{\frac{2}{\pi}\sum_{i=1}^{r}s_{i}^{2}}=\sqrt{\frac{2}{\pi}}\left\lVert \mb A\right\rVert _{\F}\,.
\]
\end{proof}
\subsection*{Acknowledgements}
This work was supported in part by Semiconductor Research Corporation (SRC).

\bibliographystyle{abbrvnat}
\bibliography{references}

\end{document}